\newcommand{\mycomment}[1]{}
\newtheorem{proposition}{Proposition}
\newcolumntype{d}[1]{D{.}{.}{#1}}
\newcolumntype{P}[1]{>{\centering}p{#1}}
\newcolumntype{M}[1]{>{\centering}m{#1}}
\def\sqw{\hbox{\rlap{\leavevmode\raise.3ex\hbox{$\sqcap$}}$%
\sqcup$}}
\def\sqb{\hbox{\hskip5pt\vrule width4pt height6pt depth1.5pt%
\hskip1pt}}
\def\qed{\ifmmode\hbox{\hfill\sqb}\else{\ifhmode\unskip\fi%
\nobreak\hfil
\penalty50\hskip1em\null\nobreak\hfil\sqb
\parfillskip=0pt\finalhyphendemerits=0\endgraf}\fi}
\def\cqfd{\ifmmode\sqw\else{\ifhmode\unskip\fi\nobreak\hfil
\penalty50\hskip1em\null\nobreak\hfil\sqw
\parfillskip=0pt\finalhyphendemerits=0\endgraf}\fi}
 \def\newblock{\ }%
\def\Our{CluVRP}
\def\Gen{UHGS}
\def\ILS{ILS}
\def\ILSClu{ILS-Clu}
\newcolumntype{M}[1]{>{\flushleft}m{#1}}
\title{Hybrid Metaheuristics for the Clustered Vehicle Routing Problem}
\author{Thibaut Vidal, Nelson Maculan, Puca Huachi Vaz Penna, Luis Satoru Ochi}
\begin{document}

\begin{center}

\begin{LARGE}
Hybrid Metaheuristics for the Clustered Vehicle Routing Problem
\end{LARGE}

\vspace*{0.65cm}

\textbf{Thibaut Vidal} \\
LIDS, Massachusetts Institute of Technology \\
vidalt@mit.edu \\
\vspace*{0.2cm}
\textbf{Maria Battarra} \\
University of Southampton, School of Mathematics, U.K.  \\
m.battarra@soton.ac.uk \\
\vspace*{0.2cm}
\textbf{Anand Subramanian} \\
Universidade Federal da Para\'{i}ba - Departamento de Engenharia de Produ\c{c}\~{a}o, Brazil  \\
anand@ct.ufpb.br \\
\vspace*{0.2cm}
\textbf{G\"{u}ne\c{s} Erdo\v{g}an} \\
University of Southampton, School of Management, U.K. \\
g.erdogan@soton.ac.uk \\

\vspace*{1.0cm}

\begin{large}
Working Paper, MIT -- April 2014
\end{large}

\vspace*{0.7cm}

\end{center}
\noindent
\textbf{Abstract.}
The Clustered Vehicle Routing Problem (\Our) is a variant of the Capacitated Vehicle Routing Problem in which customers are grouped into clusters. Each cluster has to be visited once, and a vehicle entering a cluster cannot leave it until all customers have been visited. This article presents two alternative hybrid metaheuristic algorithms for the {\Our}. The first algorithm is based on an Iterated Local Search algorithm, in which only feasible solutions are explored and problem-specific local search moves are utilized. The second algorithm is a Hybrid Genetic Search, for which the shortest Hamiltonian path between each pair of vertices within each cluster should be precomputed. 
Using this information, a sequence of clusters can be used as a solution representation and large neighborhoods can be efficiently explored by means of bi-directional dynamic programming, sequence concatenations, by using appropriate data structures. Extensive computational experiments are performed on benchmark instances from the literature, as well as new large scale ones. Recommendations on promising algorithm choices are provided relatively to average cluster size. 

\vspace*{0.6cm}

\noindent
\textbf{Keywords.}  Clustered Vehicle Routing, Iterated Local Search, Hybrid Genetic algorithm, Large Neighborhoods, Shortest Path

\vspace*{0.7cm}

\noindent

\newpage

\section{Introduction}

This paper addresses the \emph{Clustered Vehicle Routing Problem} ({\Our}), which has been recently introduced to the literature by \cite{SevauxS2008}. 
The \Our\ is defined over an undirected graph $\mathcal{G} = (\mathcal{V}, \mathcal{E})$, where the vertex 0 is the depot and any other vertex $i \in \mathcal{V}\setminus\{0\}$ is a customer with demand $q_i>0$. A fleet of $m$ vehicles, each with capacity $Q$, is stationed at the depot. The set of customers is partitioned into $N$ non-intersecting and nonempty subsets called \emph{clusters}, such that $\mathcal{V} = V_1 \cup \dots \cup V_N$. The customers in each cluster have to be visited consecutively, such that the vehicle visiting a customer in the cluster cannot leave the cluster until all the therein customers have not been visited. Each edge $(i,j) \in \mathcal{E}$ is associated with a travel cost $c_{ij}$, and the objective is to minimize the total travel cost. 
The Capacitated Vehicle Routing Problem (CVRP) is a special case of the \Our\ in which each vertex is a cluster on its own. Since the CVRP is  $\cal{NP}$-Hard, the \Our\ is also $\cal{NP}$-Hard. 

\cite{SevauxS2008} introduced the \Our\ in the context of a real-world application where containers are employed to carry goods. The customers expecting parcels in the same container form a cluster, because the courier has to deliver the content of a whole container before handling another container. Clusters also arise in applications involving passengers transportation, where passengers prefer to travel with friends or neighbors (as in the transportation of elderly to recreation centres). Gated communities (residential or industrial areas enclosed in walled enclaves for safety and protection reasons) provide another natural example of clusters.  The customers within a gated community are likely to be visited by a single vehicle in a sequence, otherwise the vehicles have to spend additional time for the security controls at the gates. 

Clusters can thus be imposed by the geography, the nature of the application, as well as by practitioners aiming to achieve \emph{compact} and easy-to-implement routing solutions. Clustered routes allow drivers to be assigned to areas (i.e., certain streets or postcodes) and allow the development of familiarity, which makes their task easier. In addition, clustered routes do not remarkably overlap among each other. In several cases, the additional routing costs due to cluster constraints are compensated by the ease of implementation and the enhanced driver familiarity.
 
The literature on the {\Our} is quite limited as of the time of this writing. \cite{Sorensen08} and \cite{SevauxS2008} presented an integer programming formulation capable of finding the best Hamiltonian path for each pair of vertices in each cluster. \cite{Barthelemy10} suggested to adapt CVRP algorithms to the \Our\ by including a large positive term $M$ to the cost of the edges between clusters and a cluster and the depot. The \Our\ is solved as a CVRP by means of the algorithm of \citet{ClarkeW1979} followed by \textsc{2-opt} moves and Simulated Annealing (SA).
The authors also suggested to dynamically set the penalty $M$, but observed that the $M$ term interferes with the Boltzmann acceptance criterion of the SA and leads to erratic performance. Computational results were not reported in this initial paper.

\cite{PopKH2012} described the directed \Our\ as an extension of the \emph{Generalized Vehicle Routing Problem} (GVRP)(\citealp{GhianiI2000}). The authors adapted two polynomial-sized formulations for the GVRP to the directed \Our, but again no computational results were reported. Recently, \cite{BattarraEV2014} proposed exact algorithms for the \Our\ and provided a set of benchmark instances with up to 481 vertices. The best performing algorithm relies on a preprocessing scheme, in which the best Hamiltonian path is precomputed for each pair of endpoints in each cluster. This allows for selecting a pair of endpoints in each cluster rather than the whole path, relegating some of the problem complexity in the preprocessing scheme. 
The resulting minimum cost Hamiltonian path problems are reduced to instances of the Traveling Salesman Problem (TSP) and optimally solved with Concorde (\citealp{concorde}). Instances of much larger size than the corresponding CVRP instances were optimally solved, thus highlighting the advantage of acknowledging the presence of clusters.

In this paper, we introduce new \mycomment{hybrid} adaptations of state-of-the-art CVRP metaheuristics for the \Our. Rather than rediscovering well-known metaheuristic concepts, we exploit the current knowledge on iterated local search and hybrid genetic algorithms \citep{Subramanian2012,Vidal2012b} and focus our attention on developing efficient problem-tailored neighborhood searches and effectively embedding them into these metaheuristic frameworks. The proposed neighborhood searches aim at 1) better exploiting clustering constraints by means of pruning techniques, 2) exploring larger neighborhoods by means of dynamic programming, 3) reducing the computational time by means of re-optimization, bi-directional search, and data structures. Finally, these experiments lead to further insights on which type of metaheuristic to use for different instance sizes and cluster characteristics. 

The reminder of the paper is organized as follows. Section \ref{Sec.Motivation} introduces the challenges related to the \Our. Sections \ref{SecILS} and \ref{SecGen} describe the proposed metaheuristics and efficient neighborhood-search strategies, whereas Section \ref{Sec.CompResults} discusses our computational results. Conclusions are drawn in Section \ref{SecConclusions}, and further avenues of research are discussed. 

\section{Motivation}\label{Sec.Motivation}

\cite{BattarraEV2014} showed that exact algorithms are capable of solving relatively large \Our\ instances. However, the CPU times remain prohibitively long for large-scale or real time applications. In this paper, we exploit the properties of the \Our\ to develop specialized \mycomment{hybrid} metaheuristics that take advantage of cluster constraints. Solution quality is assessed by a comparison with exact solutions whenever possible, and among metaheuristics when it is not.


Two recent and successful metaheuristic frameworks are used in this work.
The \ILS\ algorithm of \cite{Subramanian2012} is simple and flexible, combining the intensification strength of Local Search (LS) operators and effective diversification through perturbation operators. 
It proved to be remarkably efficient for many variants of the Vehicle Routing Problem (VRP), including the VRP with Simultaneous Pickup and Delivery (\citealp{SubramanianDBOF2010}), the Heterogeneous VRP (\citealp{PennaSO2013}), the Minimum Latency Problem (\citealp{Silva2012}) and the TSP with Mixed Pickup and Delivery (\citealp{SubramanianB2013}). The success of \ILS\ is due to a clever design of intensification and diversification neighbourhoods, as well as their random exploration. This latter component allows for extra diversity, and leads to high quality solutions, even when applied to other problems such as scheduling (\citealp{SubramanianBP2014}).

\ILS\ explores only feasible solutions, and allows for testing the $M$ approach suggested by \cite{Barthelemy10} without possible interferences between $M$ and penalties applied to infeasible solutions. As mentioned in the introduction, the $M$ approach consists of including a large positive term to all those edges that are connecting clusters and connecting the depot to the clusters. Any CVRP algorithm in which the $M$ is chosen to be large enough returns a \Our\ solution in which the number of penalized edges is minimized, therefore a solution in which the cluster constraint is satisfied. Note that the number of edges connecting clusters or connecting the depot to a cluster is $m+N$ and their penalization can be easily deducted from the solution cost. 

One drawback of this transformation is that most VRP neighborhoods consider moves of one or two vertices. These neighborhoods can often not relocate complete clusters, and thus many moves appear largely deteriorating due to $M$ penalties, significantly inhibiting the progress towards higher quality solutions. As shown in this paper, ILS can partly overcome this issue by means of shaking moves. However, as demonstrated by our computational results, a more clever application of the framework specific to the CluVRP considering relocate and exchanges of full clusters and intra-cluster improvements produces solutions of comparable quality in considerably less CPU time. In the next section, we describe the \ILS\ and these new \mycomment{hybrid algorithms} in more details.

The Unified Hybrid Genetic Search (\Gen) currently obtains the best known solutions for more than 30 variants of the CVRP and represents the state-of-the-art among hybrid metaheuristics for vehicle routing problems. More precisely, the algorithm succesfully solves problems with diverse attributes, such as multiple depots and periods \citep{Vidal2012}, time windows and vehicle-site dependencies \citep{Vidal2012c}, hours-of-service-regulations for various countries \citep{Goel2012}, soft, multiple, and general time windows, backhauls, asymmetric, cumulative and load-dependent costs, simultaneous pickup and delivery, fleet mix, time dependency and service site choice \citep{Vidal2012b}, and prize-collecting problems \citep{Vidal2014}, among others. It has been recently demonstrated that several combinatorial decisions, such as customer selections or depot placement, can be relegated directly at the level of cost and route evaluations, allowing to always rely on the same metaheuristic and local search framework while exploring large neighborhoods in polynomial or pseudo-polynomial time \citep{Vidal2012e,Vidal2014}.

Our \Gen\ implementation is based on the assumption that the costs of the optimal Hamiltonian paths among vertices in the same cluster can be efficiently precomputed as in \cite{BattarraEV2014}. Once these paths and their costs are known, an effective route representation as an ordered sequence of clusters can be adopted, and a fast shortest path-based algorithm for converting this solution representation into the corresponding optimal sequence of customers is presented in Section \ref{SecGen}. This drastically reduces the size of the search space of the \Gen\ method, which optimizes the assignment and sequencing of $\mathcal{O}(N)$ clusters instead of $\mathcal{O}(n)$ customers.

Our computational experiments allow to quantify the trade-off between adopting the preprocessing scheme to compute the Hamiltonian paths, which requires the solution of $\sum_{i=1, \dots, N} |V_i|\times (|V_i|-1)$ TSP instances and searching in the space of clusters with \Gen, or working in the space of vertices with a well-designed \ILS. As long as the average size of the clusters is not high, the computational burden of the preprocessing is not prohibitive, but is observed to become significant when the cluster size increases. On the other hand, \Gen\ is much faster when the preprocessing information is known and obtains higher quality solutions. Through our computational experiments, we aim at identifying a critical cluster size that makes an approach with cluster-based solution representation more desirable than an approach using vertex-based representation.

\section{The \ILS\ metaheuristic} \label{SecILS}

As previously mentioned, the algorithm of \cite{Subramanian2012} can be used for solving the \Our\ by applying suitable penalties to edges between clusters and between clusters and the depot. Although simple, this straightforward adaptation has two main drawbacks: (i) most of the local search moves violate the cluster constraint, leading to high penalties, and consuming a large part of the CPU time; and (ii) many promising moves that relocate full clusters are not included in the neighborhoods, thus reducing the intensification capabilities of the LS. \ILS\ was therefore adapted to better take advantage of clusters. In the following, we denote this adaptation as \ILSClu. 

The \ILSClu\ is a hybrid algorithm built upon the structure of ILS. Large neighborhoods proved to be very effective in solving VRP variants, however, identifying promising moves can be a difficult task that is usually left for a large part to randomization (e.g., in Adaptive Large Neighborhood Search, \citealp{PisingerR2007}). In contrast, the CluVRP structure enables to apply moves to relevant sets of customers. Thus, the LS phase of \ILSClu\ explores moves on different levels: among clusters, among edges connecting clusters or clusters with the depot, and within each cluster. This mechanism enables to explore a larger variety of moves while significantly reducing CPU time.

The \ILS\ of \cite{Subramanian2012} is a multi-start heuristic which returns the best solution after $n_R$ restarts. Each iteration is finished when $n_I$ consecutive shaking phases without improvement are attained.
The initial solution is generated using a \emph{parallel cheapest insertion heuristic}. Iteratively, a randomly selected customer is inserted with minimum cost, either between customers from the same cluster, or between two clusters.


Both \ILS\ and \ILSClu\ apply a perturbation mechanism after each local search phase, which consists of one or two randomly selected \textsc{Shift(1,1)} or \textsc{Swap} moves. In \ILS, \textsc{Shift(1,1)} relocates a random customer from its route $r$ to a random position in another route $r'$, and simultaneously relocates a random customer from $r'$ to a random position in $r$. The same process is applied in \ILSClu\, but considering clusters instead of single customers. Moreover, in \ILS, \textsc{Swap} exchanges two customers from different routes, whereas in \ILSClu\, the exchange involves two clusters of the same route.
Inter-route LS neighborhoods are first applied in a random order, and intra-route LS operators are employed in a random order whenever an improving solution is found.

\begin{algorithm}[ht]
  \caption{ILS}
\label{ILS}
\begin{algorithmic}[1]
\STATE \textbf{Procedure ILS:}
\STATE $s_{0}$ $\leftarrow$ GenerateInitialSolution;
\STATE $s^{*}$ $\leftarrow$ LocalSearch($s_{0}$);
\WHILE {Stopping criterion is not met} 
\STATE $s^{\prime}$ $\leftarrow$ Perturb($s^{*}$, history);
\STATE $s^{*\prime}$ $\leftarrow$ LocalSearch($s^{\prime}$);
\STATE $s^{*}$ $\leftarrow$ AcceptanceCriterion($s^{*}$, $s^{*\prime}$, history);	 
\ENDWHILE
\STATE \textbf{end} ILS;
\end{algorithmic}
\end{algorithm}

\begin{algorithm}[ht]
\caption{LS os ILS and ILS of ILS-Clu}	 
\label{ILS2}
\begin{algorithmic}
\begin{minipage}{0.47\textwidth}
\STATE \textbf{Local Search of ILS:}
\STATE Init inter-route Neighborhood List (NL);
\WHILE {NL $\not= 0$}
	\STATE Choose random Neighborhood $\in$ NL;
   \STATE Find best  $s^\prime$ of $s \in$ Neighborhood;
   	\IF {$f(s^\prime) < f(s)$}
			\STATE $s \leftarrow s^\prime$;
			\STATE $s \leftarrow $ IntraRouteSearch($s$);
                        \STATE Update NL; 
			\STATE			
			\STATE
                        \STATE
			\STATE
			
		\ELSE
			\STATE Remove Neighborhood from NL;
		\ENDIF
\ENDWHILE  
\STATE
\STATE \textbf{return} $s$;
\STATE \textbf{end}.      
\end{minipage}
\hspace*{0.2cm}
\begin{minipage}{0.47\textwidth}
\STATE \textbf{Local Search of ILS-Clu:}
\STATE Init inter-route Neighborhood List (NL$_C$);
\WHILE {NL$_C$ $\not= 0$}
	\STATE Choose random Neighborhood $\in$ NL$_C$;
   \STATE Find best $s^\prime$ of $s \in$ Neighborhood;
   	\IF {$f(s^\prime) < f(s)$}
   			\STATE $s^{\prime} \leftarrow$ EndPointsSearch($s^\prime$);
			\STATE $\bar{s} \leftarrow $ IntraRouteClusterSearch($s^\prime$);
			\IF {$f(\bar{s}) < f(s)$}
				\STATE $s \leftarrow$ EndPointsSearch($\bar{s}$);
			\ELSE 	\STATE  $s \leftarrow \bar{s}$;
			\ENDIF
			\STATE Update NL$_C$; 
	\ELSE
			\STATE Remove Neighborhood from NL$_C$;
		\ENDIF
\ENDWHILE
\STATE $s \leftarrow $ IntraClusterSearch($s$);
\STATE \textbf{return} $s$;
\STATE \textbf{end}.   
\end{minipage}
\end{algorithmic} 
\end{algorithm}

Algorithm \ref{ILS2} presents the common structure of both iterated local search algorithms. Algorithm \ref{ILS2} highlights the differences between the LS stage of \ILS\ and \ILSClu. The neighbourhoods $NL_C$, as well as the operators implemented in ``IntraRouteClusterSearch'' within \ILSClu\ modify the sequence of clusters in the routes without changing the endpoints or the Hamiltonian paths in each cluster. To partially remedy this myopic strategy, the operator ``EndPointsSearch'' aims at selecting the most effective endpoints in the clusters whenever an improving move is found. Note that ``EndPointsSearch'' does not modify the sequence of customers visited within each cluster. 
The neighborhoods considered in NL$_C$ are \textsc{Relocate1}, \textsc{Relocate2}, \textsc{Swap(1,1)}, \textsc{Swap(2,1)} and \textsc{Swap(2,2)}, as well as \textsc{2-opt*}. In  ``IntraRouteClusterSearch'', the neighborhoods are \textsc{Or-opt},  \textsc{2-opt} and \textsc{Swap}. These neighborhoods are considered in random order. Detailed descriptions of these families of neighborhoods can be found in \cite{Subramanian2012} and \cite{Vidal2012a}. ``EndPointsSearch'' and  ``IntraRouteClusterSearch'' sequentially search for improving \textsc{Relocate}, \textsc{2-opt} and \textsc{Swap} moves within the clusters. The moves considered in ``EndPointsSearch'' is a subset of ``IntraRouteClusterSearch'', in which at least one customer involved in the move is currently serviced first or last in its cluster.


\section{The \Gen\ metaheuristic} \label{SecGen}

\Gen\ is a successful framework capable of producing high quality solutions for many VRP variants. It is a hybrid algorithm, where the diversification strength of a Genetic Algorithm (GA) is combined with the fast improvement capabilities of local search. One main challenge in the design of a hybrid genetic algorithm is to achieve a good balance between intensification and diversification while controlling the use of computationally intensive local search procedures.
This balance is usually achieved by selecting a suitable initial population, crossover operators, mutation, and selection mechanisms. The variety of design choices and the tuning of a multitude of parameters often inhibit the flexibility of the GAs. In fact, most of the previous attempts in the literature focused on the design of problem-specific operators, failing to lead to general algorithms and frequently resulting in a large number of parameters to be tuned. \Gen\ \citep{Vidal2012b} managed to overcome most of these drawbacks by adopting the following strategies. 

\subsection{General UHGS methodology}

 \Gen\ evolves a population of individuals representing problem solutions, by means of selection, crossover and education operators. Note that the operator \emph{education} involves a complete local-search procedure aimed at improving the solutions rather than a randomized mutation. The population is managed to contain between $\mu^\textsc{min}$ and $\mu^\textsc{min} + \mu^\textsc{gen}$ individuals, by pruning $\mu^\textsc{gen}$ individuals whenever the maximum size is attained.
The method is run until $It_{max}$ individuals have been successively created without improvement of the best solution.

\Gen\ achieves a fine balance between intensification and diversification by means of a bi-criteria evaluation of solutions. The first criterion is the contribution of a solution to the population diversity, which is measured as the Hamming distance of the solution to the closest solutions in the population. The second criterion is the objective value. Solutions are ranked with respect to both criteria, and the sum of the ranks provides a ``biased fitness'' \citep{Vidal2012b}, used for both parents selection and survivors selection when the maximum population size is attained. To deal with tightly constrained problems, linearly penalized route-constraint violations -- capacity or distance -- are included in the objective. Penalty coefficients are dynamically adjusted to ensure a target ratio of naturally-feasible solutions during the search, and infeasible solutions are managed in a secondary population.

During crossover, the whole solution is represented as a giant tour visiting all customers once, without intermediate depot trips. As such, a simple ordered crossover (OX) that works on permutations can be used. The optimal splitting of the giant tour into separate routes is performed optimally in polynomial time as a shortest path subproblem on an auxiliary graph \citep{Prins2004}. This process is known to be widely applicable in a unified manner to many vehicle routing variants as long as it is possible to perform separate efficient route evaluations to compute the cost of edges in the auxiliary graph \citep{Vidal2013}. Finally, \Gen\ relies on local search to improve every new offspring solution generated during the search. The LS operators used in \Gen\ are \textsc{2-opt}, \textsc{2-opt*},  \textsc{Cross} and  \textsc{I-Cross} (\citealp{Vidal2012b}), limited to sequences of less than two customers.

Local search is usually the bottleneck of most advanced metaheuristics for vehicle routing variants, and thus efficient evaluations of routes generated by the neighborhoods are critical for the overall algorithm's performance. When additional attributes (constraints, objectives or decisions) are considered, these route evaluations may be time consuming if implemented in a straightforward manner. 
To improve this process, \Gen\ relies on auxiliary data structures that collect partial information on any sub-sequence of consecutive customers in the incumbent solution.  This information is then used for efficiently evaluating the cost and feasibility of new routes generated by local search moves since any such move can be seen as a recombination of subsequences of consecutive customers from the incumbent solution.

For example, consider a route $r$ for a CVRP instance where customers $(1,2,3,4,5,6,7,8)$ are visited in the given order. To efficiently evaluate the capacity constraints, the partial load $Q(\sigma)$ for any sub-sequence $(\sigma)$ of the incumbent solution is preprocessed prior to move evaluations. An intra-route \textsc{Cross} move, of customers $(2,3)$ after $8$ requires evaluating route $r'= (1,4,5,6,7,8,2,3)$ with respect to cost and load feasibility. Loads $Q(\sigma_{1})$, $Q(\sigma_{23})$ and $Q(\sigma_{48})$ are known for sequences $(1)$, $(2,3)$ and $(4,5,6,7,8)$. Denoting $\oplus$ as the concatenation operator, we have $Q(R') = Q(\sigma_{1} \oplus \sigma_{48} \oplus \sigma_{23}) =  Q(\sigma_{1}) +  Q(\sigma_{48}) +  Q(\sigma_{23})$. This load constraint can thus be checked in $\mathcal{O}(1)$ operations. Otherwise, a straightforward approach sweeping through the new route and cumulating the demands would take a number of operations proportional to the number of 
customers, that is, $\mathcal{O}(n)$. This type of route evaluation is referred to as move evaluation \emph{by concatenation} in \cite{Vidal2012b}. The computational complexity to update the auxiliary information on subsequences is usually dominated by the complexity of evaluating moves.

\subsection{Application to the CluVRP}

Our application of \Gen\ to the CluVRP relies on two contributions: a route representation based on an ordered sequence of clusters to reduce the search space, and efficient route evaluation procedures using concatenations to evaluate the cost of a route assimilated to a sequence of clusters. These methodological elements can be easily integrated into the \Gen\ framework, and it was possible to use the original UHGS code with the sole addition of a new route-evaluation operator.

The method relies on the fact that in any cluster $V_k$, the cost $\hat{c}_{ij}$ of the best Hamiltonian path between customer $i \in V_k$ and customer $j \in V_k$ that services all other customers in $V_k \setminus \{i,j\}$ has been preprocessed \citep{BattarraEV2014}. Using this information, it is possible to obtain from a route represented as a sequence of clusters the best sequence of visits to customers in polynomial time by solving the shortest path problem in an auxiliary graph $\mathcal{G}' = (\mathcal{V}',\mathcal{A}')$, as illustrated in Figure \ref{ShortestPath}.

\begin{figure}[h]
\centering
\hspace*{-1cm}
\includegraphics[width=18cm]{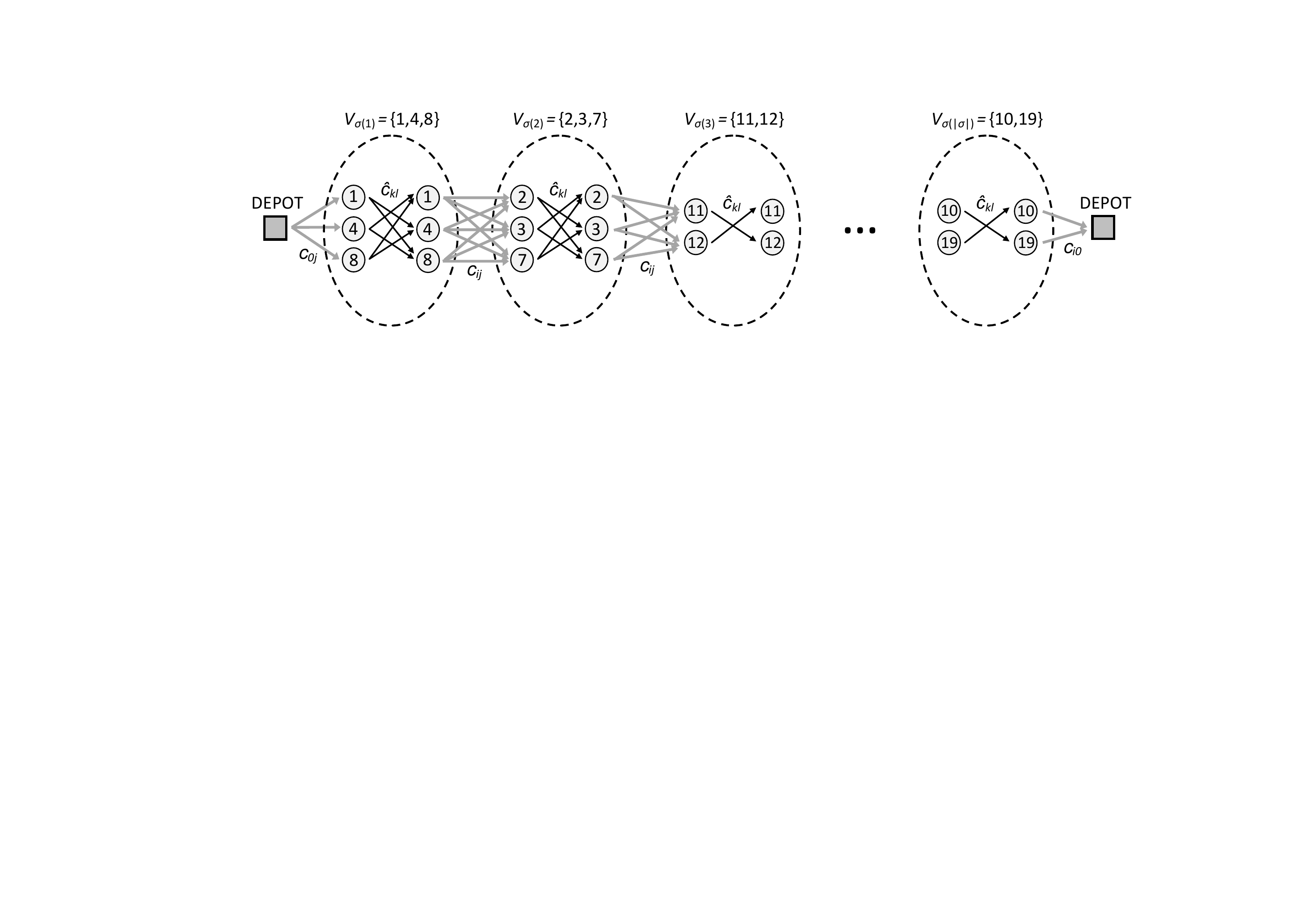}
\caption{Route representation in \Gen}
\label{ShortestPath}
\end{figure}

In Figure \ref{ShortestPath}, black lines correspond to precomputed Hamiltonian paths within clusters. For each cluster in the route, a set containing two copies of each node is generated. Pairs of node copies are connected by an arc and the cost of an arc $(k,l)$ is set to be the cost of the shortest Hamiltonian path $\hat{c}_{kl}$ in the cluster between the endpoints of the arcs. The depot is then connected to the first copy of each node in the first cluster $V_{\sigma(1)}$ by an arc $c_{0j}$, and the second copies of the nodes are connected to the first node copies of the next cluster, and so on. The cost associated to gray arcs is the travel distance between the endpoints.
A similar route representation was previously used for the GVRP by \cite{PopMS2013} and \cite{Vidal2013}. It leads to an implicit structural problem decomposition, considering only a VRP of a size proportional to the number of clusters  $N < n$. Difficult combinatorial decisions on path selections within clusters are thus relegated to the route-evaluation operators.

A straightforward application of this technique leads to route evaluations in $\mathcal{O}(N B^2)$ operations, where $B$ is the maximum number of customers in a cluster. These evaluations are computationally expensive. A contribution of this work is to show that efficient procedures based on preprocessing and concatenations allow for performing each move evaluation in amortized $O(B^2)$ operations, thus only depending on the square of the cluster size.
Our method preprocesses for each subsequence $\sigma = (\sigma(1),\dots,\sigma(|\sigma|))$ the shortest paths $S(\sigma)[i,j]$ that starts with any $i$\textsuperscript{th} customer of $\sigma$ and terminates at any $j$\textsuperscript{th} customer. The size of cluster $i$ is denoted as $\lambda_i$.

For a sequence $\sigma_0 = (s_k)$ containing a single cluster, if the cluster is restricted to a single customer $v_i$, then $S(\sigma_0)[i,i] = 0$, else $S(\sigma_0)[i,j] = +\infty$ for $i = j$ and $S(\sigma_0)[i,j] = \hat{c}_{ij}$ for $(i,j) \in \{1,\dots,\lambda_k\}^2$, $\hat{c}_{ij}$ being the distance of the best Hamiltonian path connecting $i$ and $j$ in the cluster.
As in \cite{Vidal2013}, the following equation enables us to evaluate $S(\sigma)$ on larger sub-sequences by induction on the concatenation operation. Note that it is a direct application of the Floyd-Warshall algorithm:

\begin{equation}
\begin{aligned}
S(\sigma_1 \oplus \sigma_2)[i,j] = \min_{1 \leq x \leq \lambda_{\sigma_1(|\sigma_1|)}, 1 \leq y \leq \lambda_{\sigma_2(1)}}  S(\sigma_1)[i,x] + c_{x y} + S(\sigma_2)[y,j]. \\
\forall i \in \{1,\dots,\lambda_{\sigma_1(1)}\}, \forall j  \in \{1,\dots,\lambda_{\sigma_2(|\sigma_2|)}\} \label{merge-general}
\end{aligned}
\end{equation}

 Equation (\ref{merge-general}) can therefore be used to perform preprocessing on all subsequences of customers. The same equation is then used during move evaluations to compute the cost of a new route as a concatenation of a  bounded number of existing subsequences with limited effort. Indeed, as illustrated in Figure \ref{ShortestPath2}, preprocessing this data is equivalent to preprocessing all-pairs of shortest paths  between nodes in each subsequence (in boldface in the figure). As a consequence, the size of the shortest path graph considered during separate move evaluations is considerably reduced.

\begin{figure}[h]
\centering
\hspace*{-1cm}
\includegraphics[width=18cm]{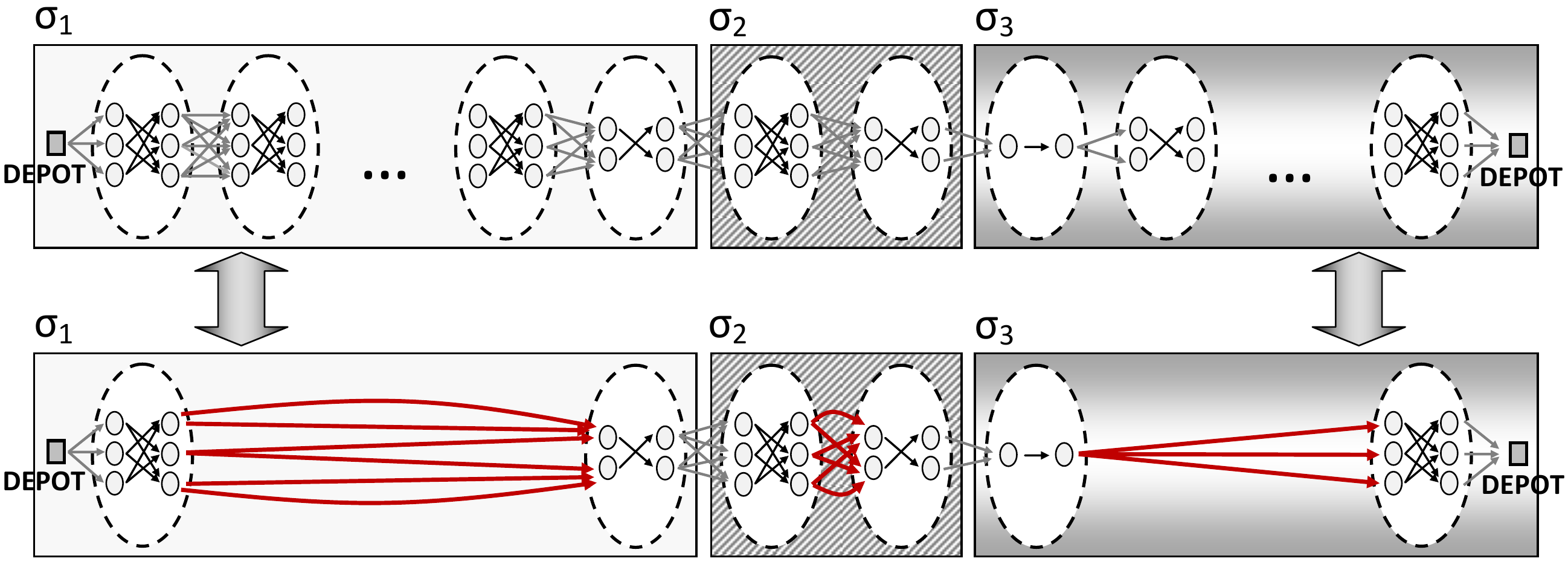}
\caption{Using preprocessed information on subsequences}
\label{ShortestPath2}
\end{figure}

\begin{proposition}
Using the proposed preprocessing, the amortized complexity of move evaluations, for classic VRP neighborhoods such as \textsc{Relocate}, \textsc{Swap}, \textsc{2-Opt}, \textsc{2-Opt*}, is $\mathcal{O}(B^2)$ instead of $\mathcal{O}(N B^2)$.
\end{proposition}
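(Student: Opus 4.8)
The plan is to observe that, for each of the listed neighborhoods, the route obtained after a move is a concatenation of a bounded number of subsequences of the incumbent solution, and that — once the matrices $S(\cdot)$ of Section~\ref{SecGen} have been preprocessed for the right set of subsequences — the cost of such a route can be recovered by $\mathcal{O}(1)$ applications of Equation~(\ref{merge-general}), each of which can be executed in $\mathcal{O}(B^2)$ time rather than in the naive $\mathcal{O}(B^3)$. The gain over the direct approach comes from the fact that the layered shortest-path graph of Figure~\ref{ShortestPath} has $\Theta(NB)$ vertices and $\Theta(NB^2)$ arcs, so rebuilding and re-solving it for every candidate move costs $\mathcal{O}(NB^2)$.

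First I would fix the preprocessing. For every route of the incumbent I store $S(\cdot)$ for each prefix ending at a given cluster (and starting at the depot) and for each suffix starting at a given cluster (and ending at the depot). These are built incrementally, extending a prefix by one cluster $V_k$ through Equation~(\ref{merge-general}) — and symmetrically for suffixes. The crucial point is that such a matrix has one dimension equal to $1$ (the depot endpoint), so it is really a vector of length at most $B$, and the extension splits as $S(\sigma\oplus V_k)[0,j]=\min_{y}\big(\hat c_{yj}+\min_{x}(S(\sigma)[0,x]+c_{xy})\big)$, where the inner minima over $x$ are computed once for all $y$ in $\mathcal{O}(B^2)$ and the outer minima over $j$ in $\mathcal{O}(B^2)$; hence one extension costs $\mathcal{O}(B^2)$, and since the routes contain $N$ clusters in total the whole preprocessing costs $\mathcal{O}(NB^2)$. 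I would also record the elementary fact, used for reversals, that by symmetry of the costs $S(\overleftarrow{\sigma})[i,j]=S(\sigma)[j,i]$, so a reversed subsequence is the transpose of an already-computed matrix and is obtained at no extra cost.

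Then I would go through the moves. For \textsc{2-Opt*} the two new routes are each a single concatenation (prefix of one route) $\oplus$ (suffix of another), both operands of width $1$, so the single entry $\min_{x,y}(S(P)[0,x]+c_{xy}+S(S')[y,0])$ is computed in $\mathcal{O}(B^2)$. For \textsc{Relocate} and \textsc{Swap} the new routes have the form (prefix) $\oplus$ (one or two explicitly given clusters) $\oplus$ (suffix); chaining the same splitting trick — concatenating a width-$1$ object with a single cluster again yields a width-$1$ object in $\mathcal{O}(B^2)$ — gives $\mathcal{O}(B^2)$ overall, and the removal side of the move (prefix $\oplus$ suffix of the same route) is likewise $\mathcal{O}(B^2)$. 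For \textsc{2-Opt} the new route is (prefix) $\oplus$ (reversed middle segment) $\oplus$ (suffix), where the reversed segment is the transpose of a preprocessed matrix, and the three-way concatenation is again collapsed to $\mathcal{O}(B^2)$ by computing two $\mathcal{O}(B^2)$ auxiliary vectors (best exit of the prefix into each entry node of the segment, best entry of each exit node of the segment into the suffix) and then combining them with the segment matrix in $\mathcal{O}(B^2)$. Since the $\mathcal{O}(NB^2)$ preprocessing is done once per local-search call (or maintained incrementally after an accepted move) and charged against the $\Omega(N)$ evaluations of a neighborhood sweep, it contributes $\mathcal{O}(B^2)$ amortized per evaluation, matching the per-evaluation cost, which proves the bound.

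The step I expect to be the real obstacle is the bookkeeping that simultaneously (i) keeps the total preprocessing within the $\mathcal{O}(NB^2)$ budget and (ii) makes available every subsequence some move needs — in particular the arbitrary middle segments consumed by \textsc{2-Opt} and the two-cluster blocks moved by \textsc{Swap}/\textsc{Cross}, whose matrices are thick on both ends. The delicate point is to organize the neighborhood scans so that these segments are grown one cluster at a time and reused across moves, and, above all, to ensure that no concatenation appearing in an evaluation chain ever has both operands of width $\Theta(B)$: a genuine min-plus product of two $B\times B$ matrices costs $\mathcal{O}(B^3)$ and would break the claimed bound, so the argument hinges on always keeping one operand of width $1$ (a depot-anchored piece) or reducing to that situation by the splitting identity above.
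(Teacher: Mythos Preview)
Your proposal is correct and reaches the same $\mathcal{O}(B^2)$ amortized bound, but by a leaner route than the paper. The paper preprocesses the full $B\times B$ matrices $S(\sigma)$ for \emph{every} contiguous subsequence of every route --- $\mathcal{O}(N^2)$ of them --- at a stated cost of $\mathcal{O}(N^2B^2)$, and then amortizes this over the $\Theta(N^2)$ moves of a full neighborhood sweep; with all subsequences in hand, each move (including \textsc{2-Opt} with its reversed middle segment) is a bounded number of lookups plus concatenations, so the ``real obstacle'' you flag simply never arises. You instead store only the $\mathcal{O}(N)$ depot-anchored prefix and suffix vectors at cost $\mathcal{O}(NB^2)$ and amortize over $\Omega(N)$ evaluations; this forces you to grow the thick middle segments incrementally during the \textsc{2-Opt} scan and --- more importantly --- to be explicit about the width-$1$ splitting identity that keeps every concatenation at $\mathcal{O}(B^2)$ rather than a genuine $\mathcal{O}(B^3)$ min-plus product, a point the paper's proof asserts without the supporting argument. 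What the paper's approach buys is uniformity (all pieces are precomputed, no scan-order discipline is needed); what yours buys is lower memory and a more rigorous accounting of the per-concatenation cost.
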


\begin{proof}
First, from the current incumbent solution, the preprocessing phase requires computing the shortest paths between each pair of nodes, for each route. For each route, the graph  $\mathcal{G}'$ is directed and acyclic. Equation (\ref{merge-general}) is applied iteratively, in lexicographic order starting from any cluster $\sigma_i$, $i \in \{1,\dots,|\sigma|\}$ and iteratively applied to $\sigma_j$ for $j \in \{i+1,\dots,|\sigma|\}$ to produce all shortest paths. This equation is thus used $\mathcal{O}(N^2)$ times to perform a complete preprocessing on all routes. Each evaluation of this expression requires $\mathcal{O}(B^2)$ time. The total effort for the preprocessing phase is  $\mathcal{O}(N^2 B^2)$.

After preprocessing, a local search using classic VRP neighborhoods is performed. Any move based on less than $k$ edge exchanges can be assimilated to a recombination of up to $k+1$ subsequences of consecutive clusters. This is the case for the mentioned neighborhoods with $k \leq 4$. Thus, each  move evaluation is performed with a bounded number of calls to Equation (\ref{merge-general}), in $\mathcal{O}(B^2)$ elementary operations.
The size $S$ of each neighborhood is quadratic in the number of clusters (e.g. swapping any cluster $i$ with cluster $j$ leads to $S = \Theta(N^2)$ possible moves), such that a complete neighborhood exploration takes $\mathcal{O}(B^2 N^2)$ time. The amortized complexity, for each move evaluation, considering both preprocessing and effective evaluation is thus $\mathcal{O}( \frac{N^2 B^2}{S}) = \mathcal{O}(B^2)$.
\end{proof}

\section{Computational results}\label{Sec.CompResults}

Computational experiments have been conducted on multiple benchmark instance sets. The first sets have been recently presented in \cite{BattarraEV2014}. The authors considered instances proposed for the GVRP literature by \cite{Bektas11} (namely GVRP2 and GVRP3 sets) and then generated larger instances by adapting the CVRP instances proposed by \cite{GoldenWKC1998} (namely Golden) with the same method as for the instance proposed by \cite{Bektas11}. Among the instances proposed in \cite{BattarraEV2014}, we include in our benchmark set all Golden instances (200 up to 484 customers) and the most challenging ones among the GVRP2 and GVRP3 sets (the instances denoted as G and C in the GVRP literature, with 101 up to 262 customers). 
We also generated an additional instance set with even larger problems (called hereafter Li), by adapting with the same logic as in \cite{Bektas11} the instances originally proposed by \cite{LiGW2005}. The latter set contains instances with up to 1200 customers. 
We decided to have clusters with average cardinality $\theta=5$, leading to larger instances with 121 up to 225 clusters.  
A summary of the characteristics of our benchmark set is provided in Table \ref{TabSummaryInst}. All sets of instances are available upon request and detailed result tables are displayed in the appendix.

\begin{table}[h]\center \caption{Summary of benchmark set characteristics}\label{TabSummaryInst}
\setlength{\extrarowheight}{1.5pt}
 \begin{tabular}{l|c|c|c|c}                                                                                                                                                                                                                                                      Instance Set	&	Source	&	\# Inst.& $n$&$|\mathcal{C}|$			\\
\hline											
C	&	\cite{Bektas11}	&	2 & 101-200&	34-100	\\
G	&	\cite{Bektas11}	&	8 & 262-262	&	88-131	\\
Golden	&	\cite{BattarraEV2014}&220&	201-481	&	17-97	\\
Li	&	New	&	12 & 560-1200	&	113	-241	\\
\hline											
\end{tabular}											

 \end{table} 

An extensive calibration effort was spent in previous literature to find good and robust parameters for UHGS \citep{Vidal2012} and ILS \citep{Subramanian2012}. We have relied on this knowledge to obtain an initial parameter setting, and then scaled the parameters controlling algorithm termination to generate solutions for large-scale instances in reasonable CPU time. As such, the population-size parameters of UHGS are set to $(\mu^\textsc{min},\mu^\textsc{gen}) = (8,8)$ and the termination criterion is $It_{max} = 400$. For ILS, the number of restarts has been set to $n_R =  50$ and the number of shaking iteration is $n_I = n + 5m$ as in \citet{Subramanian2012}. The choice of $n_I = 1000$ was adopted for \ILSClu.
All experiments have been conducted on a Xeon CPU with 3.07 GHz and 16 GB of RAM, running under Oracle Linux Server 6.4. Each algorithm was executed 10 times for each instance using a different random seed.


Table \ref{TabAllGCLi} summarizes the results obtained by the proposed \mycomment{hybrid} metaheuristics.  For each benchmark set, the number of instances ``Inst'' is given, as well as the number of times ``\# BKS'' the best known solution is found by \ILS, \ILSClu\ and \Gen, respectively. Columns  6-9 provide the average CPU time per instance in seconds. $UHGS_{p}$ also includes the CPU time dedicated to computing the cost of all intra-cluster Hamiltonian paths with Concorde. Columns 10-12 report the average percentage of deviation from the best known solutions ``Avg. \% Dev.''. Note that the percentage deviation for a solution of value $z$ from the best known solution value $z_{BKS}$ is computed as $\frac{z-z_{BKS}}{z_{BKS}}\times 100$. The last row reports the overall number of best known solutions found by each method, the average CPU time and percentage average deviation. 

From the experiments, it appears that \Gen\ is capable of finding most of the best known solutions (234 out of 242). In most cases, the average percentage gaps among the three methods is still small: \ILSClu\ has an average deviation of 0.19\% from the best known solutions and \ILS\ has an average deviation of 0.13\%.
\ILS\ is remarkably slower than the two other algorithms. The average CPU time for the large instances in the Li data set is 9548.6 seconds, versus 535.8, 345.3, 660.0 seconds of \ILSClu, \Gen, and \Gen$_p$, respectively. Despite the simplicity of adapting a CVRP metaheuristic to the \Our\ by including a penalization $M$ term, this resulting algorithm is much slower than the algorithms that take full advantage of the cluster constraints. Note that \ILS\ performs about 0.06\% better than \ILSClu, but \ILSClu\ is 15 times faster on average. 

\Gen$_p$ is faster on average than \ILS, even with the exhaustive search of all intra-cluster Hamiltonian paths using Concorde. This preprocessing phase is fast when the average cluster size is limited, but requires large CPU time when the cluster size increases, as in the case of the Golden instances. A heuristic evaluation of the cost of intra-cluster Hamiltonian paths could be a viable alternative. This is left as a research perspective.  Finally, \Gen\ is faster than \ILSClu\ for very large instances. \ILS-Clu\ is on average faster on the G and C data sets, but slower on average on the Li set. 

\begin{table}[h]\center \caption{Summary of results for the G, C, Golden and Li data set}\label{TabAllGCLi} \small
\hspace*{-0.7cm}
\setlength{\extrarowheight}{1pt}
\scalebox{0.9}
{
 \begin{tabular}{l|c|ccc|cccc|ccc}                                                                                                                                                                                                                                                             																								
		&		&	\multicolumn{3}{c}{\# BKS}					&	\multicolumn{4}{c}{Avg. Time (s)}							&	\multicolumn{3}{c}{Avg. \% Dev.}					\\
	Instance Set	&	$|Inst.|$	&	\ILS\	&	\ILSClu\	&	\Gen\	&	\ILS\	&	\ILSClu\	&	\Gen\	&	\Gen$_p$	&	\ILS\	&	\ILSClu\	&	\Gen\		\\
	\hline																							
	G	&	2	&	0	&	1	&	2	&	127.6	&	53.5	&	150.2	&	165.2	&	0.64	&	0.22	&	0.00	\\
	C	&	8	&	6	&	8	&	7	&	26.0	&	17.8	&	27.1	&	35.1	&	0.19	&	0.04	&	0.05	\\
	Golden&	220	&	127	&	87	&	213	&	698.8	&	53.9	&	53.7	&	854.9   &   0.11    &	0.19	&	0.01	\\
	Li	&	12	&	1	&	0	&	12	&	9548.6	&	535.8	&	345.3	&	660.0	&	0.34    &	0.21	&	0.00	\\
	\hline																							
	Tot:	&	242	&	134	&	96	&	234	&	1110.7	&	76.6	&	68.1	&	812.4	&	0.13	&	0.19	&	0.01	\\
	\end{tabular}																							

}
 \end{table}  

\begin{table}[h]\center \caption{Summary of results for the Golden data set grouped by instance}\label{TabSummaryGoldenI}
\hspace*{-1.2cm}
\setlength{\extrarowheight}{1.5pt}
\scalebox{0.95}
{
 \begin{tabular}{l|c|ccc|cccc|ccc}	&	&	\multicolumn{3}{c}{\#	Opt.}	&	\multicolumn{4}{c}{Avg.	Time	(s)}	&	\multicolumn{3}{c}{Avg.	\%	Dev.}	\\												
	Golden	&	$n$	&	\ILS\	&	\ILSClu\	&	\Gen\	&	\ILS\	&	\ILSClu\	&	\Gen\	& \Gen\ $_p$ &	\ILS\	&	\ILSClu\	&	\Gen\	\\			
	\hline																								
	1	&	241	&	11	&	9	&	11	&	141.92	&	23.95	&	22.4	&	172.94	&		0	&	0.03	&	0	\\
	2	&	321	&	3	&	2	&	11	&	442.25	&	47.27	&	47.57	&	243.39	&		0.08	&	0.07	&	0	\\
	3	&	401	&	5	&	2	&	11	&	1115.99	&	85.16	&	82.91	&	1384.18	&		0.12	&	0.14	&	0	\\
	4	&	481	&	1	&	0	&	9	&	2336.64	&	130.52	&	137.95	&	2608.13	&		0.12	&	0.13	&	0.01	\\
	5	&	201	&	11	&	11	&	11	&	81.18	&	19.92	&	14.57	&	2866.57	&		0	&	0	&	0	\\
	6	&	281	&	10	&	8	&	11	&	308.71	&	47.15	&	34.14	&	3848.32	&		0	&	0.03	&	0	\\
	7	&	361	&	3	&	1	&	11	&	816.36	&	73.97	&	66.94	&	2220.30	&		0.09	&	0.13	&	0	\\
	8	&	441	&	4	&	0	&	10	&	1573.62	&	101.69	&	97.09	&	1017.63	&		0.08	&	0.16	&	0	\\
	9	&	256	&	10	&	9	&	10	&	148.03	&	21.57	&	22.09	&	135.45	&		0.03	&	0.06	&	0.03	\\
	10	&	324	&	6	&	4	&	11	&	336.87	&	31.55	&	43.82	&	175.73	&		0.19	&	0.31	&	0	\\
	11	&	400	&	3	&	0	&	11	&	658.46	&	46.9	&	59.62	&	198.00	&		0.3	&	0.56	&	0	\\
	12	&	484	&	3	&	1	&	11	&	1420.29	&	72.22	&	94.05	&	389.16	&		0.48	&	0.73	&	0	\\
	13	&	253	&	8	&	5	&	11	&	145.67	&	21.86	&	23.33	&	164.69	&		0.05	&	0.12	&	0	\\
	14	&	321	&	7	&	2	&	11	&	333.56	&	34.29	&	38.12	&	152.78	&		0.12	&	0.25	&	0	\\
	15	&	397	&	1	&	0	&	9	&	713.35	&	52.3	&	65.99	&	279.15	&		0.33	&	0.48	&	0.03	\\
	16	&	481	&	3	&	3	&	11	&	1344.21	&	74.34	&	84.3	&	246.31	&		0.15	&	0.39	&	0	\\
	17	&	241	&	11	&	11	&	11	&	159.43	&	27.2	&	20.92	&	176.87	&		0	&	0	&	0	\\
	18	&	301	&	10	&	11	&	11	&	309.59	&	37.9	&	30.75	&	191.26	&		0.02	&	0	&	0	\\
	19	&	361	&	10	&	7	&	11	&	523.32	&	52.43	&	38.65	&	276.27	&		0.01	&	0.04	&	0	\\
	20	&	421	&	7	&	1	&	10	&	886.08	&	77.59	&	49.59	&	351.74	&		0.11	&	0.22	&	0.09	\\
	\hline																								
	Tot:	&	&	127	&	87	&	213	&	&	&	&	&	&  &	\\										
	\end{tabular}

}
 \end{table}  

 \begin{table}[h]\center \caption{Summary of results for the Golden data set grouped by average cluster size}\label{TabSummaryGolden}
\hspace*{-0.7cm}
\setlength{\extrarowheight}{1.5pt}
\scalebox{0.95}
{
 \begin{tabular}{l|ccc|cccc|ccc}
  &	\multicolumn{3}{c}{\# Opt.}	&	\multicolumn{4}{c}{Avg. Time (s)}	&	\multicolumn{3}{c}{Avg. \   Dev.}	\\
	$\theta$	&	\ILS\	&	\ILSClu\	&	\Gen\	&	\ILS\	&	\ILSClu\	&	\Gen$_p$\	&	\Gen$_p$	&	\ILS\	&	\ILSClu\	&	\Gen\	\\
\hline
	5	&	17	&	8	&	20	&	670.40	&	55.60	&	36.29	&	140.32	&	0.02  	&	0.11  	&	0.00  	\\
	6	&	18	&	12	&	20	&	663.38	&	53.84	&	37.22	&	155.89	&	0.02  	&	0.08  	&	0.00  	\\
	7	&	17	&	9	&	20	&	670.84	&	52.68	&	39.55	&	173.19	&	0.01  	&	0.11  	&	0.00  	\\
	8	&	11	&	9	&	20	&	688.00	&	50.83	&	43.15	&	251.55	&	0.08  	&	0.12  	&	0.00  	\\
	9	&	12	&	10	&	20	&	689.86	&	48.98	&	45.71	&	307.15	&	0.08  	&	0.18  	&	0.00  	\\
	10	&	12	&	7	&	20	&	691.00	&	49.16	&	49.64	&	553.37	&	0.11  	&	0.18  	&	0.00  	\\
	11	&	10	&	9	&	20	&	709.81	&	48.85	&	50.82	&	417.97	&	0.13  	&	0.18  	&	0.00  	\\
	12	&	9	&	9	&	20	&	695.70	&	50.12	&	54.77	&	1025.66	&	0.13  	&	0.18  	&	0.00  	\\
	13	&	8	&	5	&	20	&	725.73	&	53.12	&	69.04	&	916.16	&	0.18  	&	0.29  	&	0.00  	\\
	14	&	7	&	5	&	17	&	699.89	&	59.99	&	73.52	&	2327.81	&	0.24  	&	0.37  	&	0.06  	\\
	15	&	6	&	4	&	16	&	682.94	&	70.72	&	91.43	&	3135.31	&	0.23  	&	0.31  	&	0.03  	\\
	\hline
	Avg:	&	11.55	&	7.91	&	19.36	&	689.78	&	53.99	&	53.74	&	854.94	&	0.11 	&	0.19 	&	0.01 	\\
\end{tabular}																						

}
 \end{table}  

A more detailed comparison of the algorithms is displayed in Table \ref{TabSummaryGoldenI} for the Golden data set. The large number of instances in this set allows for an analysis of the algorithms' performances by varying number of the customers and cluster size. The table reports aggregated results, obtained by averaging over instances with the same number of customers. 
 
A correlation between the size of the instance and the performance of \ILS\ can be observed; larger instances lead to larger gaps and higher CPU time.  On the other hand, the performance of \ILSClu\ is less dependent on instance size. For example, instances of group 12 with 484 customers are the most challenging for \ILSClu\ with a 0.73\% average deviation, but the deviation for instances of group 4, with size $n = 481$, is only 0.13\% in average. A similar observation stands for \Gen, the most challenging instance groups being 4, 9, 14, and 20 with 481, 256, 397 and 421 customers, respectively.
 
 Aggregating the Golden instances by average cluster size $\theta$, as done in Table \ref{TabSummaryGolden}, leads to a further level of understanding of algorithms performance. All algorithms find solutions close to the best known when the average cluster size is large and therefore less clusters are present. The average CPU time of \ILS\ does not depend on the average cluster size, whereas \Gen\ is consistently faster when large and few clusters are present. \ILSClu\ attains its minimum CPU time when the average cluster size is approximately 9 customers. This is due to the fact that \ILSClu\ performs both intra and inter-cluster LS moves; a balanced instance in terms of number and size of the clusters is a good compromise in terms of CPU time. Finally \Gen\ was capable of improving the best known solutions for five instances from \cite{BattarraEV2014}. The values of these solutions are listed in Table \ref{TabAll3}.

\section{Conclusions} \label{SecConclusions}

This paper focused on the \Our, a generalization of the CVRP where customers are grouped into clusters. 
Three metaheuristics have been proposed, two of which are based on iterated local search, while the third is a hybrid genetic algorithm with a cluster-based solution representation. Efficient large neighborhood search procedures based on re-optimization techniques have been developed and integrated with the hybrid genetic search. The resulting three methods produce high quality solutions, and algorithms taking advantage of the cluster structure \mycomment{ and large neighborhoods} are remarkably faster. The hybrid genetic algorithm and large neighborhood search leads to solutions of higher quality that the two ILS based algorithms, but its pre-processing phase may become time consuming for instances with large clusters. Future work should consider heuristic preprocessing techniques to enhance CPU time, and other large neighborhoods strategies taking advantage of clusters.

\section{Acknowledgements}\label{Acknowledgements}
This work was partially supported by CORMSIS, Centre of Operational Research, Management Sciences and Information Systems, and by CNPq, Conselho Nacional de Desenvolvimento Cient{\'i}fico e Tecnol{\'o}gico (grant 471158/2012-7). 


\begin{thebibliography}{28}
\expandafter\ifx\csname natexlab\endcsname\relax\def\natexlab#1{#1}\fi
\expandafter\ifx\csname url\endcsname\relax
  \def\url#1{{\tt #1}}\fi
\expandafter\ifx\csname urlprefix\endcsname\relax\def\urlprefix{URL }\fi
\expandafter\ifx\csname urlstyle\endcsname\relax
  \expandafter\ifx\csname doi\endcsname\relax
  \def\doi#1{doi:\discretionary{}{}{}#1}\fi \else
  \expandafter\ifx\csname doi\endcsname\relax
  \def\doi{doi:\discretionary{}{}{}\begingroup \urlstyle{rm}\Url}\fi \fi

\bibitem[{Applegate et~al.(2001)Applegate, Bixby, Chv\`atal, and
  Cook}]{concorde}
Applegate, D., R.~Bixby, V.~Chv\`atal, W.~Cook. 2001.
\newblock Concorde tsp solver.

\bibitem[{Barth\'elemy et~al.(2010)Barth\'elemy, Rossi, Sevaux, and
  S\"{o}rensen}]{Barthelemy10}
Barth\'elemy, T., A.~Rossi, M.~Sevaux, K.~S\"{o}rensen. 2010.
\newblock Metaheuristic approach for the clustered vrp.
\newblock {\it EU/ME 2010 - 10th anniversary of the metaheuristic community\/}.
  Lorient, France.

\bibitem[{Battarra et~al.(2014)Battarra, Erdo\u{g}an, and
  Vigo}]{BattarraEV2014}
Battarra, M., G.~Erdo\u{g}an, D.~Vigo. 2014.
\newblock The clustered vehicle routing problem.
\newblock {\it Operations Research\/} {\bf 62} 58--71.

\bibitem[{Bekta\c{s} et~al.(2011)Bekta\c{s}, Erdo\u{g}an, and Ropke}]{Bektas11}
Bekta\c{s}, T., G.~Erdo\u{g}an, S.~Ropke. 2011.
\newblock Formulations and branch-and-cut algorithms for the generalized
  vehicle routing problem.
\newblock {\it Transportation Science\/} {\bf 45} 299--316.

\bibitem[{Clarke and Wright(1964)}]{ClarkeW1979}
Clarke, G., J.~W. Wright. 1964.
\newblock {Scheduling of Vehicles from a Central Depot to a Number of Delivery
  Points}.
\newblock {\it Operations research\/} {\bf 12} 568--581.

\bibitem[{Ghiani and Improta(2000)}]{GhianiI2000}
Ghiani, G., G.~Improta. 2000.
\newblock An efficient transformation of the generalized vehicle routing
  problem.
\newblock {\it European Journal of Operational Research\/} {\bf 122} 11--17.

\bibitem[{Goel and Vidal(2013)}]{Goel2012}
Goel, A., T.~Vidal. 2013.
\newblock {Hours of service regulations in road freight transport: an
  optimization-based international assessment}.
\newblock {\it Transportation Science, Articles in Advance\/} .

\bibitem[{Golden et~al.(1998)Golden, Wasil, Kelly, and Chao}]{GoldenWKC1998}
Golden, B.L., E.A. Wasil, J.~P. Kelly, I.-M. Chao. 1998.
\newblock Metaheuristics in vehicle routing.
\newblock {\it Fleet Management and Logistics\/}. Kluwer, Boston, 33--56.

\bibitem[{Li et~al.(2005)Li, Golden, and Wasil}]{LiGW2005}
Li, F., G.~Golden, E.~Wasil. 2005.
\newblock Very large-scale vehicle routing: New test problems, algorithms, and
  results.
\newblock {\it Computers \& Operations Research\/} {\bf 32} 1165--1179.

\bibitem[{Penna et~al.(2013)Penna, Subramanian, and Ochi}]{PennaSO2013}
Penna, P.H.V., A.~Subramanian, L.S. Ochi. 2013.
\newblock An iterated local search heuristic for the heterogeneous fleet
  vehicle routing problem.
\newblock {\it Journal of Heuristics\/} {\bf 19} 201--232.

\bibitem[{Pisinger and Ropke(2007)}]{PisingerR2007}
Pisinger, D., S.~Ropke. 2007.
\newblock A general heuristic for vehicle routing problems.
\newblock {\it Computers \& Operations Research\/} {\bf 34} 2403--2435.

\bibitem[{Pop et~al.(2012)Pop, Kara, and Horvat-Marc}]{PopKH2012}
Pop, P.C., I.~Kara, A.~Horvat-Marc. 2012.
\newblock New mathematical models of the generalized vehicle routing problem
  and extensions.
\newblock {\it Applied Mathematical Modelling\/} {\bf 36} 97--107.

\bibitem[{Pop et~al.(2013)Pop, Matei, and Pop~Sitar}]{PopMS2013}
Pop, P.C., O.~Matei, C.~Pop~Sitar. 2013.
\newblock An improved hybrid algorithm for solving the generalized vehicle
  routing problem.
\newblock {\it Neurocomputing\/} {\bf 109} 76--83.

\bibitem[{Prins(2004)}]{Prins2004}
Prins, C. 2004.
\newblock {A simple and effective evolutionary algorithm for the vehicle
  routing problem}.
\newblock {\it Computers \& Operations Research\/} {\bf 31} 1985--2002.

\bibitem[{Sevaux and S\"{o}rensen(2008)}]{SevauxS2008}
Sevaux, M., K.~S\"{o}rensen. 2008.
\newblock Hamiltonian paths in large clustered routing problems.
\newblock {\it Proceedings of the EU/MEeting 2008 workshop on Metaheuristics
  for Logistics and Vehicle Routing\/}. Troyes, France.

\bibitem[{Silva et~al.(2012)Silva, Subramanian, Vidal, and Ochi}]{Silva2012}
Silva, M.M., A.~Subramanian, T.~Vidal, L.S. Ochi. 2012.
\newblock {A simple and effective metaheuristic for the Minimum Latency
  Problem}.
\newblock {\it European Journal of Operational Research\/} {\bf 221} 513--520.

\bibitem[{S\"{o}rensen et~al.(2008)S\"{o}rensen, {Van den Bergh}, Cattrysse,
  and Sevaux}]{Sorensen08}
S\"{o}rensen, K., J.~{Van den Bergh}, D.~Cattrysse, M.~Sevaux. 2008.
\newblock A multiobjective distributionproblem for parcel delivery at tnt.
\newblock {\it Invited Talk at the International Workshop on Vehicle Routing in
  Practice, VIP'08\/}. Oslo, Norway.

\bibitem[{Subramanian(2012)}]{Subramanian2012}
Subramanian, A. 2012.
\newblock Heuristic, exact and hybrid approaches for vehicle routing problems.
\newblock Ph.D. thesis, Universitade Federal Fluminense.

\bibitem[{Subramanian and Battarra(2013)}]{SubramanianB2013}
Subramanian, A., M.~Battarra. 2013.
\newblock An iterated local search algorithm for the traveling salesman problem
  with pickups and deliveries.
\newblock {\it Journal of the Operational Research Society\/} {\bf 64}
  402--409.

\bibitem[{Subramanian et~al.(2014)Subramanian, Battarra, and
  Potts}]{SubramanianBP2014}
Subramanian, A., M.~Battarra, C.~Potts. 2014.
\newblock An iterated local search heuristic for the single machine total
  weighted tardiness problem with sequence-dependent setup times.
\newblock {\it International Journal of Production Research\/} {\bf 52}
  2729--2742.

\bibitem[{Subramanian et~al.(2010)Subramanian, Drummond, Bentes, Ochi, and
  Farias}]{SubramanianDBOF2010}
Subramanian, A., L.M.A. Drummond, C.~Bentes, L.S. Ochi, R.~Farias. 2010.
\newblock A parallel heuristic for the vehicle routing problem with
  simultaneous pickup and delivery.
\newblock {\it Computers \& Operations Research\/} {\bf 37} 1899--1911.

\bibitem[{Vidal(2013)}]{Vidal2013}
Vidal, T. 2013.
\newblock Approches g\'{e}n\'{e}rales de r\'{e}solution pour les probl\'{e}mes
  multi-attributs de tourn\'{e}s de v\'{e}hicules et confection d'horaires.
\newblock Ph.D. thesis, Université de Montr\'{e}al \& Universit\'{e} de
  Technologie de Troyes.

\bibitem[{Vidal et~al.(2012)Vidal, Crainic, Gendreau, Lahrichi, and
  Rei}]{Vidal2012}
Vidal, T., T.G. Crainic, M.~Gendreau, N.~Lahrichi, W.~Rei. 2012.
\newblock {A hybrid genetic algorithm for multidepot and periodic vehicle
  routing problems}.
\newblock {\it Operations Research\/} {\bf 60} 611--624.

\bibitem[{Vidal et~al.(2013{\natexlab{a}})Vidal, Crainic, Gendreau, and
  Prins}]{Vidal2012c}
Vidal, T., T.G. Crainic, M.~Gendreau, C.~Prins. 2013{\natexlab{a}}.
\newblock {A hybrid genetic algorithm with adaptive diversity management for a
  large class of vehicle routing problems with time-windows}.
\newblock {\it Computers \& Operations Research\/} {\bf 40} 475--489.

\bibitem[{Vidal et~al.(2013{\natexlab{b}})Vidal, Crainic, Gendreau, and
  Prins}]{Vidal2012a}
Vidal, T., T.G. Crainic, M.~Gendreau, C.~Prins. 2013{\natexlab{b}}.
\newblock {Heuristics for multi-attribute vehicle routing problems: a survey
  and synthesis}.
\newblock {\it European Journal of Operational Research\/} {\bf 231} 1--21.

\bibitem[{Vidal et~al.(2014{\natexlab{a}})Vidal, Crainic, Gendreau, and
  Prins}]{Vidal2012b}
Vidal, T., T.G. Crainic, M.~Gendreau, C.~Prins. 2014{\natexlab{a}}.
\newblock {A unified solution framework for multi-attribute vehicle routing
  problems}.
\newblock {\it European Journal of Operational Research\/} {\bf 234} 658--673.

\bibitem[{Vidal et~al.(2014{\natexlab{b}})Vidal, Crainic, Gendreau, and
  Prins}]{Vidal2012e}
Vidal, T., T.G. Crainic, M.~Gendreau, C.~Prins. 2014{\natexlab{b}}.
\newblock Implicit depot assignments and rotations in vehicle routing
  heuristics.
\newblock {\it European Journal of Operational Research\/} {\bf 237} 15--28.

\bibitem[{Vidal et~al.(2014{\natexlab{c}})Vidal, Maculan, Ochi, and
  Penna}]{Vidal2014}
Vidal, T., N.~Maculan, L.S. Ochi, P.H.V. Penna. 2014{\natexlab{c}}.
\newblock {Large neighborhoods with implicit customer selection for vehicle
  routing problems with profits}.
\newblock Tech. rep., CIRRELT, Montr\'{e}al.

\end{thebibliography}

\section{Detailed results}
\begin{landscape}

\begin{table}[!htb]\center \caption{Detailed results, Golden 1--4.}\label{TabAll1} \tiny
\setlength{\extrarowheight}{1.5pt}
 \begin{tabular}{l|ccc|c|ccc|ccc|ccccc}																						
	&	&	&	&	Exact	&	\multicolumn{3}{c|}{\ILS}	&	\multicolumn{3}{c|}{\ILSClu}	&	\multicolumn{5}{c}{\Gen}\\			
Inst.	&	$n$	&	$|\mathcal{C}|$	&	$m$	&	&	Best	&	Avg.	&		Avg.		&		Best		&	Avg.	&		Avg.		&		Best		&	Avg.	&		Avg.		&		Preproc.		&	Total\\					
	&	&	&	&	&	&	&Time(s)	&	&	&	Time(s)		&		&		&	Time(s)&		(s)		&		Time(s)\\																		
\hline																																														
Golden	1	&	241	&	17	&	4	&	\bf{	4831	}	&	\bf{	4831	}	&	 	4831.4	 	&	129.6	&	\bf{	4831	}	&	 	4831.9	 	&	23.9	&	\bf{	4831	}	&	\bf{	4831	}	&	14.2	&	541	&	555.2	\\
Golden	1	&	241	&	18	&	4	&	\bf{	4847	}	&	\bf{	4847	}	&	 	4849.6	 	&	134.3	&	\bf{	4847	}	&	 	4848.4	 	&	21.8	&	\bf{	4847	}	&	\bf{	4847	}	&	17.1	&	151	&	168.1	\\
Golden	1	&	241	&	19	&	4	&	\bf{	4872	}	&	\bf{	4872	}	&	 	4877.8	 	&	137	&	\bf{	4872	}	&	 	4877.1	 	&	22	&	\bf{	4872	}	&	\bf{	4872	}	&	17.5	&	151	&	168.5	\\
Golden	1	&	241	&	21	&	4	&	\bf{	4889	}	&	\bf{	4889	}	&	 	4896.2	 	&	135.8	&	\bf{	4889	}	&	 	4891.2	 	&	21.2	&	\bf{	4889	}	&	\bf{	4889	}	&	17.6	&	147	&	164.6	\\
Golden	1	&	241	&	22	&	4	&	\bf{	4908	}	&	\bf{	4908	}	&	 	4908.7	 	&	147	&	\bf{	4908	}	&	 	4912.3	 	&	22.2	&	\bf{	4908	}	&	\bf{	4908	}	&	17.6	&	151	&	168.6	\\
Golden	1	&	241	&	25	&	4	&	\bf{	4899	}	&	\bf{	4899	}	&	 	4905.9	 	&	134.4	&	\bf{	4899	}	&	 	4899.9	 	&	24	&	\bf{	4899	}	&	\bf{	4899	}	&	20	&	144	&	164	\\
Golden	1	&	241	&	27	&	4	&	\bf{	4934	}	&	\bf{	4934	}	&	 	4943	 	&	143.1	&	\bf{	4934	}	&	 	4935.3	 	&	22.4	&	\bf{	4934	}	&	\bf{	4934	}	&	20.6	&	115	&	135.6	\\
Golden	1	&	241	&	31	&	4	&	\bf{	5050	}	&	\bf{	5050	}	&	 	5051.5	 	&	150	&	\bf{	5050	}	&	 	5050.2	 	&	23.6	&	\bf{	5050	}	&	\bf{	5050	}	&	21.6	&	90	&	111.6	\\
Golden	1	&	241	&	35	&	4	&	\bf{	5102	}	&	\bf{	5102	}	&	 	5113.3	 	&	152.6	&	 	5108	 	&	 	5118	 	&	24.3	&	\bf{	5102	}	&	\bf{	5102	}	&	28.5	&	65	&	93.5	\\
Golden	1	&	241	&	41	&	4	&	\bf{	5097	}	&	\bf{	5097	}	&	 	5108.4	 	&	149.1	&	 	5107	 	&	 	5113.3	 	&	27.8	&	\bf{	5097	}	&	\bf{	5097	}	&	31.9	&	55	&	86.9	\\
Golden	1	&	241	&	49	&	3	&	\bf{	5000	}	&	\bf{	5000	}	&	 	5011.2	 	&	148.3	&	\bf{	5000	}	&	 	5013.8	 	&	30.5	&	\bf{	5000	}	&	\bf{	5000	}	&	39.7	&	46	&	85.7	\\
\hline																																														
Golden	2	&	321	&	22	&	4	&	\bf{	7716	}	&	\bf{	7716	}	&	 	7719.5	 	&	461.8	&	 	7718	 	&	 	7722.2	 	&	51.6	&	\bf{	7716	}	&	\bf{	7716	}	&	31	&	368	&	399	\\
Golden	2	&	321	&	23	&	4	&	\bf{	7693	}	&	\bf{	7693	}	&	 	7698.7	 	&	444.5	&	\bf{	7693	}	&	 	7700.2	 	&	51.4	&	\bf{	7693	}	&	\bf{	7693	}	&	31.6	&	332	&	363.6	\\
Golden	2	&	321	&	25	&	4	&	\bf{	7668	}	&	\bf{	7668	}	&	 	7675	 	&	441.9	&	 	7669	 	&	 	7677	 	&	49.3	&	\bf{	7668	}	&	\bf{	7668	}	&	32.8	&	305	&	337.8	\\
Golden	2	&	321	&	27	&	4	&	\bf{	7638	}	&	 	7644	 	&	 	7649.6	 	&	460.3	&	\bf{	7638	}	&	 	7649.4	 	&	45.2	&	\bf{	7638	}	&	 	7638.2	 	&	34.3	&	262	&	296.3	\\
Golden	2	&	321	&	30	&	4	&	\bf{	7617	}	&	 	7619	 	&	 	7635.3	 	&	447.4	&	 	7629	 	&	 	7644	 	&	38.8	&	\bf{	7617	}	&	\bf{	7617	}	&	37.6	&	172	&	209.6	\\
Golden	2	&	321	&	33	&	4	&	\bf{	7640	}	&	 	7649	 	&	 	7660.3	 	&	452.8	&	 	7646	 	&	 	7658	 	&	38.8	&	\bf{	7640	}	&	 	7641.5	 	&	48.3	&	152	&	200.3	\\
Golden	2	&	321	&	36	&	4	&	\bf{	7643	}	&	 	7650	 	&	 	7656.5	 	&	439.9	&	 	7653	 	&	 	7660.4	 	&	39.4	&	\bf{	7643	}	&	 	7649.7	 	&	49.1	&	167	&	216.1	\\
Golden	2	&	321	&	41	&	4	&	\bf{	7738	}	&	 	7759	 	&	 	7771.5	 	&	410.5	&	 	7742	 	&	 	7756.7	 	&	43.5	&	\bf{	7738	}	&	\bf{	7738	}	&	47.6	&	125	&	172.6	\\
Golden	2	&	321	&	46	&	4	&	\bf{	7861	}	&	 	7874	 	&	 	7888	 	&	431.8	&	 	7871	 	&	 	7884.7	 	&	45.3	&	\bf{	7861	}	&	 	7863.8	 	&	55.6	&	99	&	154.6	\\
Golden	2	&	321	&	54	&	4	&	\bf{	7920	}	&	 	7925	 	&	 	7937.9	 	&	427.5	&	 	7926	 	&	 	7931.6	 	&	53.4	&	\bf{	7920	}	&	 	7920.4	 	&	73	&	96	&	169	\\
Golden	2	&	321	&	65	&	4	&	\bf{	7892	}	&	 	7898	 	&	 	7910.8	 	&	446.4	&	 	7900	 	&	 	7906.3	 	&	63.2	&	\bf{	7892	}	&	 	7893.4	 	&	82.4	&	76	&	158.4	\\
\hline																																														
Golden	3	&	401	&	27	&	4	&	\bf{	10540	}	&	\bf{	10540	}	&	 	10547.6	 	&	1077.6	&	 	10547	 	&	 	10572.5	 	&	88.3	&	\bf{	10540	}	&	\bf{	10540	}	&	51.2	&	9678	&	9729.2	\\
Golden	3	&	401	&	29	&	4	&	\bf{	10504	}	&	\bf{	10504	}	&	 	10513.5	 	&	1020.5	&	 	10509	 	&	 	10520.9	 	&	83.7	&	\bf{	10504	}	&	\bf{	10504	}	&	58.2	&	2696	&	2754.2	\\
Golden	3	&	401	&	31	&	4	&	\bf{	10486	}	&	\bf{	10486	}	&	 	10497.5	 	&	1103.9	&	 	10500	 	&	 	10511.5	 	&	78.4	&	\bf{	10486	}	&	\bf{	10486	}	&	56	&	326	&	382	\\
Golden	3	&	401	&	34	&	4	&	\bf{	10465	}	&	\bf{	10465	}	&	 	10492.7	 	&	1093	&	 	10483	 	&	 	10497.6	 	&	79.1	&	\bf{	10465	}	&	\bf{	10465	}	&	59.6	&	372	&	431.6	\\
Golden	3	&	401	&	37	&	4	&	\bf{	10482	}	&	\bf{	10482	}	&	 	10504.1	 	&	1096.4	&	\bf{	10482	}	&	 	10515.2	 	&	78.1	&	\bf{	10482	}	&	\bf{	10482	}	&	64.6	&	346	&	410.6	\\
Golden	3	&	401	&	41	&	4	&	\bf{	10501	}	&	 	10518	 	&	 	10533	 	&	1194.8	&	 	10523	 	&	 	10544.2	 	&	79.5	&	\bf{	10501	}	&	\bf{	10501	}	&	76.1	&	290	&	366.1	\\
Golden	3	&	401	&	45	&	4	&	\bf{	10485	}	&	 	10502	 	&	 	10526.1	 	&	1078	&	 	10495	 	&	 	10525.1	 	&	76.4	&	\bf{	10485	}	&	\bf{	10485	}	&	69.3	&	97	&	166.3	\\
Golden	3	&	401	&	51	&	4	&	\bf{	10583	}	&	 	10605	 	&	 	10637.3	 	&	1172.1	&	\bf{	10583	}	&	 	10639.6	 	&	78.8	&	\bf{	10583	}	&	\bf{	10583	}	&	83.6	&	153	&	236.6	\\
Golden	3	&	401	&	58	&	4	&	\bf{	10776	}	&	 	10789	 	&	 	10810.7	 	&	1157.1	&	 	10797	 	&	 	10821.5	 	&	84.1	&	\bf{	10776	}	&	 	10777.9	 	&	124.8	&	124	&	248.8	\\
Golden	3	&	401	&	67	&	4	&	\bf{	10797	}	&	 	10824	 	&	 	10843.2	 	&	1110.9	&	 	10821	 	&	 	10844.8	 	&	96.4	&	\bf{	10797	}	&	 	10801.2	 	&	121.5	&	123	&	244.5	\\
Golden	3	&	401	&	81	&	4	&	\bf{	10614	}	&	 	10654	 	&	 	10669.7	 	&	1171.6	&	 	10662	 	&	 	10682	 	&	113.8	&	\bf{	10614	}	&	 	10621.6	 	&	147.1	&	109	&	256.1	\\
\hline																																														
Golden	4	&	481	&	33	&	4	&	\bf{	13598	}	&	 	13606	 	&	 	13614.3	 	&	2397.9	&	 	13602	 	&	 	13628.5	 	&	130.6	&	\bf{	13598	}	&	\bf{	13598	}	&	87.4	&	7460	&	7547.4	\\
Golden	4	&	481	&	35	&	4	&	\bf{	13643	}	&	\bf{	13643	}	&	 	13688.4	 	&	2362.8	&	 	13678	 	&	 	13706.4	 	&	127.2	&	\bf{	13643	}	&	\bf{	13643	}	&	88.5	&	7434	&	7522.5	\\
Golden	4	&	481	&	37	&	4	&	\bf{	13520	}	&	 	13525	 	&	 	13557.9	 	&	2207.7	&	 	13535	 	&	 	13569.3	 	&	113.5	&	\bf{	13520	}	&	\bf{	13520	}	&	89.1	&	1161	&	1250.1	\\
Golden	4	&	481	&	41	&	4	&	\bf{	13460	}	&	 	13466	 	&	 	13494.9	 	&	2116.6	&	 	13468	 	&	 	13492.7	 	&	113.4	&	\bf{	13460	}	&	\bf{	13460	}	&	95.9	&	5438	&	5533.9	\\
Golden	4	&	481	&	44	&	4	&	\bf{	13568	}	&	 	13588	 	&	 	13604.9	 	&	2113.7	&	 	13591	 	&	 	13601.5	 	&	112.8	&	\bf{	13568	}	&	 	13570.5	 	&	121.1	&	1789	&	1910.1	\\
Golden	4	&	481	&	49	&	4	&	\bf{	13758	}	&	 	13784	 	&	 	13812.8	 	&	2262.5	&	 	13767	 	&	 	13794.8	 	&	121.8	&	\bf{	13758	}	&	\bf{	13758	}	&	117.9	&	1659	&	1776.9	\\
Golden	4	&	481	&	54	&	4	&	\bf{	13760	}	&	 	13772	 	&	 	13829.5	 	&	2368	&	 	13779	 	&	 	13816.6	 	&	123.7	&	\bf{	13760	}	&	\bf{	13760	}	&	121.2	&	113	&	234.2	\\
Golden	4	&	481	&	61	&	4	&	\bf{	13791	}	&	 	13810	 	&	 	13857.6	 	&	2294.6	&	 	13800	 	&	 	13846.6	 	&	123.5	&	\bf{	13791	}	&	 	13791.9	 	&	133.3	&	1635	&	1768.3	\\
Golden	4	&	481	&	69	&	4	&	\bf{	13966	}	&	 	14000	 	&	 	14034.8	 	&	2584.6	&	 	13979	 	&	 	14022.8	 	&	136.1	&	\bf{	13966	}	&	 	13967.2	 	&	195.6	&	198	&	393.6	\\
Golden	4	&	481	&	81	&	4	&	\bf{	13975	}	&	 	13987	 	&	 	14031.5	 	&	2710.1	&	 	14010	 	&	 	14031.6	 	&	149.3	&	 	13977	 	&	 	13980.2	 	&	174	&	152	&	326	\\
Golden	4	&	481	&	97	&	4	&	\bf{	13775	}	&	 	13821	 	&	 	13857.5	 	&	2284.7	&	 	13808	 	&	 	13849	 	&	183.8	&	 	13783	 	&	 	13792.2	 	&	293.5	&	133	&	426.5	\\
\hline																																														
\end{tabular}																																														

 \end{table}
\end{landscape} 

\begin{landscape} 
 \begin{table}[!htb]\center \caption{Detailed results, Golden 5--8.}\label{TabAll2} \tiny
\setlength{\extrarowheight}{1.5pt}
 \begin{tabular}{l|ccc|c|ccc|ccc|ccccc}																																														
	&	&	&	&	Exact	&	\multicolumn{3}{c|}{\ILS}	&	\multicolumn{3}{c|}{\ILSClu}	&	\multicolumn{5}{c}{\Gen}\\																																			
Inst.	&	$n$	&	$|\mathcal{C}|$	&	$m$	&	&	Best	&	Avg.	&		Avg.		&		Best		&	Avg.	&		Avg.		&		Best		&	Avg.	&		Avg.		&		Preproc.		&	Total\\					
	&	&	&	&	&	&	&Time(s)	&	&	&	Time(s)		&		&		&	Time(s)&		(s)		&		Time(s)\\																		
\hline																																														
Golden	5	&	201	&	14	&	4	&	\bf{	7622	}	&	\bf{	7622	}	&	 	7622.5	 	&	74.5	&	\bf{	7622	}	&	 	7640.8	 	&	22.5	&	\bf{	7622	}	&	\bf{	7622	}	&	8.3	&	7651	&	7659.3	\\
Golden	5	&	201	&	15	&	3	&	\bf{	7424	}	&	\bf{	7424	}	&	 	7443.6	 	&	94.2	&	\bf{	7424	}	&	 	7427.6	 	&	23.9&		\bf{	7424	}	&	\bf{	7424	}	&	12.2	&	7568	&	7580.2	\\
Golden	5	&	201	&	16	&	3	&	\bf{	7491	}	&	\bf{	7491	}	&	\bf{	7491	}	&	94.3	&	\bf{	7491	}	&	\bf{	7491	}	&	25.1	&	\bf{	7491	}	&	\bf{	7491	}	&	13.3	&	7758	&	7771.3	\\
Golden	5	&	201	&	17	&	3	&	\bf{	7434	}	&	\bf{	7434	}	&	 	7445.8	 	&	90.6	&	\bf{	7434	}	&	 	7439.9	 	&	21.5	&	\bf{	7434	}	&	\bf{	7434	}	&	13.3	&	6893	&	6906.3	\\
Golden	5	&	201	&	19	&	4	&	\bf{	7576	}	&	\bf{	7576	}	&	\bf{	7576	}	&	71.2	&	\bf{	7576	}	&	\bf{	7576	}	&	16.9	&	\bf{	7576	}	&	\bf{	7576	}	&	12.8	&	498	&	510.8	\\
Golden	5	&	201	&	21	&	4	&	\bf{	7596	}	&	\bf{	7596	}	&	 	7596.3	 	&	76.7	&	\bf{	7596	}	&	 	7596.3	 	&	16	&	\bf{	7596	}	&	\bf{	7596	}	&	12.8	&	209	&	221.8	\\
Golden	5	&	201	&	23	&	4	&	\bf{	7643	}	&	\bf{	7643	}	&	\bf{	7643	}	&	78.7	&	\bf{	7643	}	&	\bf{	7643	}	&	16.9	&	\bf{	7643	}	&	\bf{	7643	}	&	14.6	&	151	&	165.6	\\
Golden	5	&	201	&	26	&	4	&	\bf{	7560	}	&	\bf{	7560	}	&	\bf{	7560	}	&	78.3	&	\bf{	7560	}	&	\bf{	7560	}	&	18	&	\bf{	7560	}	&	\bf{	7560	}	&	14.9	&	221	&	235.9	\\
Golden	5	&	201	&	29	&	4	&	\bf{	7410	}	&	\bf{	7410	}	&	\bf{	7410	}	&	75.3	&	\bf{	7410	}	&	\bf{	7410	}	&	17.1	&	\bf{	7410	}	&	\bf{	7410	}	&	16.5	&	219	&	235.5	\\
Golden	5	&	201	&	34	&	4	&	\bf{	7429	}	&	\bf{	7429	}	&	\bf{	7429	}	&	78.6	&	\bf{	7429	}	&	 	7430.6	 	&	20.1	&	\bf{	7429	}	&	\bf{	7429	}	&	19.4	&	164	&	183.4	\\
Golden	5	&	201	&	41	&	4	&	\bf{	7241	}	&	\bf{	7241	}	&	 	7243	 	&	80.5	&	\bf{	7241	}	&	\bf{	7241	}	&	21	&	\bf{	7241	}	&	\bf{	7241	}	&	22.1	&	40	&	62.1	\\
\hline																																														
Golden	6	&	281	&	19	&	3	&	\bf{	8624	}	&	\bf{	8624	}	&	 	8629.2	 	&	353.4	&	\bf{	8624	}	&	\bf{	8624	}	&	61.5	&	\bf{	8624	}	&	\bf{	8624	}	&	29.1	&	18971	&	19000.1	\\
Golden	6	&	281	&	21	&	3	&	\bf{	8628	}	&	\bf{	8628	}	&	 	8629.7	 	&	371.1	&	\bf{	8628	}	&	 	8632.4	 	&	59.8	&	\bf{	8628	}	&	\bf{	8628	}	&	27.7	&	20337	&	20364.7	\\
Golden	6	&	281	&	22	&	3	&	\bf{	8646	}	&	\bf{	8646	}	&	 	8648	 	&	378.7	&	\bf{	8646	}	&	 	8650.1	 	&	59.6	&	\bf{	8646	}	&	\bf{	8646	}	&	29.9	&	789	&	818.9	\\
Golden	6	&	281	&	24	&	4	&	\bf{	8853	}	&	\bf{	8853	}	&	 	8868.5	 	&	281	&	\bf{	8853	}	&	 	8869.8	 	&	41	&	\bf{	8853	}	&	\bf{	8853	}	&	26	&	525	&	551	\\
Golden	6	&	281	&	26	&	4	&	\bf{	8910	}	&	\bf{	8910	}	&	 	8916.6	 	&	271.9	&	\bf{	8910	}	&	 	8921.1	 	&	39.2	&	\bf{	8910	}	&	\bf{	8910	}	&	28	&	465	&	493	\\
Golden	6	&	281	&	29	&	4	&	\bf{	8936	}	&	\bf{	8936	}	&	 	8962.8	 	&	302.7	&	 	8949	 	&	 	8963.7	 	&	40.5	&	\bf{	8936	}	&	\bf{	8936	}	&	34.3	&	257	&	291.3	\\
Golden	6	&	281	&	32	&	4	&	\bf{	8891	}	&	\bf{	8891	}	&	 	8905.8	 	&	282.7	&	\bf{	8891	}	&	 	8899.9	 	&	39	&	\bf{	8891	}	&	\bf{	8891	}	&	30.6	&	92	&	122.6	\\
Golden	6	&	281	&	36	&	4	&	\bf{	8969	}	&	\bf{	8969	}	&	 	8970.4	 	&	284.7	&	\bf{	8969	}	&	 	8971.6	 	&	38.8	&	\bf{	8969	}	&	\bf{	8969	}	&	32.3	&	160	&	192.3	\\
Golden	6	&	281	&	41	&	4	&	\bf{	9028	}	&	\bf{	9028	}	&	 	9039.8	 	&	284.6	&	 	9039	 	&	 	9043.2	 	&	40.4	&	\bf{	9028	}	&	\bf{	9028	}	&	43	&	150	&	193	\\
Golden	6	&	281	&	47	&	4	&	\bf{	8923	}	&	\bf{	8923	}	&	 	8924.2	 	&	282.8	&	\bf{	8923	}	&	\bf{	8923	}	&	46.3	&	\bf{	8923	}	&	\bf{	8923	}	&	39.6	&	134	&	173.6	\\
Golden	6	&	281	&	57	&	4	&	\bf{	9028	}	&	 	9031	 	&	 	9052.6	 	&	302.2	&	 	9031	 	&	 	9052.1	 	&	52.6	&	\bf{	9028	}	&	 	9028.7	 	&	54.9	&	76	&	130.9	\\
\hline																																														
Golden	7	&	361	&	25	&	3	&	\bf{	9904	}	&	 	9909	 	&	 	9931	 	&	1038.6	&	 	9922	 	&	 	9947.9	 	&	90.2	&	\bf{	9904	}	&	\bf{	9904	}	&	57.7	&	12354	&	12411.7	\\
Golden	7	&	361	&	26	&	3	&	\bf{	9888	}	&	 	9907	 	&	 	9914.5	 	&	951.4	&	 	9905	 	&	 	9924.4	 	&	84.4	&	\bf{	9888	}	&	\bf{	9888	}	&	57.4	&	3017	&	3074.4	\\
Golden	7	&	361	&	28	&	3	&	\bf{	9917	}	&	\bf{	9917	}	&	 	9944	 	&	946.7	&	 	9922	 	&	 	9949.9	 	&	80	&	\bf{	9917	}	&	\bf{	9917	}	&	61.7	&	3092	&	3153.7	\\
Golden	7	&	361	&	31	&	4	&	\bf{	10021	}	&	 	10035	 	&	 	10048.9	 	&	737.3	&	 	10025	 	&	 	10054.7	 	&	66.1	&	\bf{	10021	}	&	\bf{	10021	}	&	48.5	&	2626	&	2674.5	\\
Golden	7	&	361	&	33	&	4	&	\bf{	10029	}	&	 	10040	 	&	 	10057.2	 	&	707.6	&	 	10042	 	&	 	10059.8	 	&	62.6	&	\bf{	10029	}	&	\bf{	10029	}	&	51.8	&	980	&	1031.8	\\
Golden	7	&	361	&	37	&	4	&	\bf{	10131	}	&	 	10141	 	&	 	10150	 	&	723.5	&	 	10134	 	&	 	10148.1	 	&	65.7	&	\bf{	10131	}	&	\bf{	10131	}	&	62	&	1005	&	1067	\\
Golden	7	&	361	&	41	&	4	&	\bf{	10052	}	&	\bf{	10052	}	&	 	10071.3	 	&	736.5	&	\bf{	10052	}	&	 	10067.6	 	&	67.1	&	\bf{	10052	}	&	\bf{	10052	}	&	58.1	&	116	&	174.1	\\
Golden	7	&	361	&	46	&	4	&	\bf{	10080	}	&	\bf{	10080	}	&	 	10100	 	&	744.7	&	 	10094	 	&	 	10113.1	 	&	64.9	&	\bf{	10080	}	&	\bf{	10080	}	&	62.5	&	201	&	263.5	\\
Golden	7	&	361	&	52	&	4	&	\bf{	10095	}	&	 	10096	 	&	 	10117.6	 	&	804.7	&	 	10116	 	&	 	10130.6	 	&	65.3	&	\bf{	10095	}	&	\bf{	10095	}	&	83	&	108	&	191	\\
Golden	7	&	361	&	61	&	4	&	\bf{	10096	}	&	 	10115	 	&	 	10159.8	 	&	823.9	&	 	10128	 	&	 	10139.5	 	&	77.4	&	\bf{	10096	}	&	\bf{	10096	}	&	84.5	&	93	&	177.5	\\
Golden	7	&	361	&	73	&	4	&	\bf{	10014	}	&	 	10038	 	&	 	10051.9	 	&	765.2	&	 	10034	 	&	 	10057	 	&	89.9	&	\bf{	10014	}	&	 	10014.9	 	&	109.1	&	95	&	204.1	\\
\hline																																														
Golden	8	&	441	&	30	&	4	&	\bf{	10866	}	&	\bf{	10866	}	&	 	10885.5	 	&	1650.7	&	 	10883	 	&	 	10897.3	 	&	102.7	&	\bf{	10866	}	&	\bf{	10866	}	&	67.1	&	1004	&	1071.1	\\
Golden	8	&	441	&	32	&	4	&	\bf{	10831	}	&	\bf{	10831	}	&	 	10845.8	 	&	1700.1	&	 	10845	 	&	 	10866.9	 	&	104.2	&	\bf{	10831	}	&	\bf{	10831	}	&	67.3	&	1003	&	1070.3	\\
Golden	8	&	441	&	34	&	4	&	\bf{	10847	}	&	 	10849	 	&	 	10868.8	 	&	1565.7	&	 	10852	 	&	 	10892	 	&	104	&	\bf{	10847	}	&	\bf{	10847	}	&	65.9	&	970	&	1035.9	\\
Golden	8	&	441	&	37	&	4	&	\bf{	10859	}	&	 	10871	 	&	 	10885.1	 	&	1530.1	&	 	10883	 	&	 	10893.7	 	&	100.2	&	\bf{	10859	}	&	\bf{	10859	}	&	76.2	&	829	&	905.2	\\
Golden	8	&	441	&	41	&	4	&	\bf{	10934	}	&	\bf{	10934	}	&	 	10964.9	 	&	1660.8	&	 	10945	 	&	 	10959	 	&	94.5	&	\bf{	10934	}	&	\bf{	10934	}	&	81.8	&	792	&	873.8	\\
Golden	8	&	441	&	45	&	4	&	\bf{	10960	}	&	\bf{	10960	}	&	 	10996.5	 	&	1503.3	&	 	10969	 	&	 	10987.8	 	&	88.9	&	\bf{	10960	}	&	\bf{	10960	}	&	82.4	&	4836	&	4918.4	\\
Golden	8	&	441	&	49	&	4	&	\bf{	11042	}	&	 	11064	 	&	 	11089.5	 	&	1568.9	&	 	11060	 	&	 	11083.4	 	&	83.6	&	\bf{	11042	}	&	 	11043.3	 	&	95.8	&	127	&	222.8	\\
Golden	8	&	441	&	56	&	4	&	\bf{	11194	}	&	 	11211	 	&	 	11244.6	 	&	1678.1	&	 	11210	 	&	 	11251.9	 	&	87.7	&	\bf{	11194	}	&	 	11196.7	 	&	92.6	&	166	&	258.6	\\
Golden	8	&	441	&	63	&	4	&	\bf{	11252	}	&	 	11267	 	&	 	11312.3	 	&	1563.2	&	 	11267	 	&	 	11309	 	&	100.9	&	\bf{	11252	}	&	 	11263.6	 	&	127.4	&	155	&	282.4	\\
Golden	8	&	441	&	74	&	4	&	\bf{	11321	}	&	 	11347	 	&	 	11372	 	&	1394.3	&	 	11357	 	&	 	11371	 	&	114.4	&	\bf{	11321	}	&	 	11326.8	 	&	162.7	&	132	&	294.7	\\
Golden	8	&	441	&	89	&	4	&	\bf{	11209	}	&	 	11215	 	&	 	11258	 	&	1494.7	&	 	11243	 	&	 	11271.8	 	&	137.7	&	 	11211	 	&	 	11212.2	 	&	148.8	&	112	&	260.8	\\
\hline																																														
\end{tabular}																																														

 \end{table}
\end{landscape}

\begin{landscape} 
 \begin{table}[!htb]\center \caption{Detailed results, Golden 9--12.}\label{TabAll3} \tiny
\setlength{\extrarowheight}{1.5pt}
 \begin{tabular}{l|ccc|c|ccc|ccc|ccccc}																																														
	&	&	&	&	Exact	&	\multicolumn{3}{c|}{\ILS}	&	\multicolumn{3}{c|}{\ILSClu}	&	\multicolumn{5}{c}{\Gen}\\																																			
Inst.	&	$n$	&	$|\mathcal{C}|$	&	$m$	&	&	Best	&	Avg.	&		Avg.		&		Best		&	Avg.	&		Avg.		&		Best		&	Avg.	&		Avg.		&		Preproc.		&	Total\\					
	&	&	&	&	&	&	&Time(s)	&	&	&	Time(s)		&		&		&	Time(s)&		(s)		&		Time(s)\\																		
\hline																																														
Golden	9	&	256	&	18	&	4	&	\bf{	300	}	&	\bf{	300	}	&	\bf{	300	}	&	142.3	&	\bf{	300	}	&	\bf{	300	}	&	22.7	&	\bf{	300	}	&	\bf{	300	}	&	15	&	164	&	179	\\
Golden	9	&	256	&	19	&	4	&	\bf{	299	}	&	\bf{	299	}	&	 	299.3	 	&	135.3	&	\bf{	299	}	&	 	299.4	 	&	22	&	\bf{	299	}	&	\bf{	299	}	&	16.1	&	157	&	173.1	\\
Golden	9	&	256	&	20	&	4	&	\bf{	296	}	&	\bf{	296	}	&	\bf{	296	}	&	144.7	&	\bf{	296	}	&	 	296.3	 	&	21.9	&	\bf{	296	}	&	\bf{	296	}	&	17.6	&	133	&	150.6	\\
Golden	9	&	256	&	22	&	4	&	\bf{	290	}	&	\bf{	290	}	&	 	291	 	&	141.9	&	\bf{	290	}	&	 	291.1	 	&	21.3	&	\bf{	290	}	&	\bf{	290	}	&	18.2	&	138	&	156.2	\\
Golden	9	&	256	&	24	&	4	&	\bf{	290	}	&	\bf{	290	}	&	 	290.9	 	&	146.8	&	\bf{	290	}	&	 	291.6	 	&	20.6	&	\bf{	290	}	&	\bf{	290	}	&	19.3	&	113	&	132.3	\\
Golden	9	&	256	&	26	&	4	&	\bf{	288	}	&	\bf{	288	}	&	 	288.5	 	&	149.7	&	\bf{	288	}	&	 	290.2	 	&	18.9	&	\bf{	288	}	&	\bf{	288	}	&	21.5	&	92	&	113.5	\\
Golden	9	&	256	&	29	&	4	&	\bf{	292	}	&	\bf{	292	}	&	 	293.7	 	&	146.4	&	\bf{	292	}	&	 	294.8	 	&	19.2	&	\bf{	292	}	&	\bf{	292	}	&	23.6	&	211	&	234.6	\\
Golden	9	&	256	&	32	&	4	&	\bf{	297	}	&	\bf{	297	}	&	 	298	 	&	148.9	&	\bf{	297	}	&	 	298.3	 	&	20.9	&	\bf{	297	}	&	\bf{	297	}	&	22.5	&	79	&	101.5	\\
Golden	9	&	256	&	37	&	4	&	\bf{	294	}	&	\bf{	294	}	&	 	294.7	 	&	149.5	&	\bf{	294	}	&	 	294.5	 	&	21.1	&	\bf{	294	}	&	\bf{	294	}	&	25.9	&	68	&	93.9	\\
Golden	9	&	256	&	43	&	4	&	\bf{	295	}	&	\bf{	295	}	&	 	296.2	 	&	156.8	&	 	296	 	&	 	296.9	 	&	22.8	&	\bf{	295	}	&	 	295.8	 	&	31.6	&	59	&	90.6	\\
Golden	9	&	256	&	52	&	4	&	\bf{	296	}	&	 	297	 	&	 	298.6	 	&	166	&	 	297	 	&	 	298.2	 	&	26	&	 	297	 	&	 	297	 	&	31.6	&	33	&	64.6	\\
\hline																																														
Golden	10	&	324	&	22	&	4	&	\bf{	367	}	&	\bf{	367	}	&	 	368.6	 	&	331.3	&	 	369	 	&	 	369.9	 	&	31.2	&	\bf{	367	}	&	\bf{	367	}	&	30.3	&	230	&	260.3	\\
Golden	10	&	324	&	24	&	4	&	\bf{	361	}	&	\bf{	361	}	&	 	362.5	 	&	319.4	&	\bf{	361	}	&	 	362.2	 	&	28.7	&	\bf{	361	}	&	\bf{	361	}	&	31.6	&	175	&	206.6	\\
Golden	10	&	324	&	25	&	4	&	\bf{	359	}	&	\bf{	359	}	&	 	360.4	 	&	322.4	&	\bf{	359	}	&	 	360	 	&	29.3	&	\bf{	359	}	&	\bf{	359	}	&	31.9	&	173	&	204.9	\\
Golden	10	&	324	&	27	&	4	&	\bf{	361	}	&	\bf{	361	}	&	 	362.1	 	&	329.9	&	\bf{	361	}	&	 	362.6	 	&	29.9	&	\bf{	361	}	&	\bf{	361	}	&	33.4	&	168	&	201.4	\\
Golden	10	&	324	&	30	&	4	&	\bf{	367	}	&	\bf{	367	}	&	 	368.8	 	&	350	&	 	368	 	&	 	369	 	&	30.6	&	\bf{	367	}	&	 	367.1	 	&	40.6	&	145	&	185.6	\\
Golden	10	&	324	&	33	&	4	&	\bf{	373	}	&	 	375	 	&	 	376.6	 	&	352.8	&	 	376	 	&	 	377.2	 	&	31.1	&	\bf{	373	}	&	 	374.2	 	&	37.7	&	125	&	162.7	\\
Golden	10	&	324	&	36	&	4	&	\bf{	385	}	&	 	387	 	&	 	387.7	 	&	338.6	&	 	387	 	&	 	388.9	 	&	29.3	&	\bf{	385	}	&	 	385.2	 	&	42.1	&	144	&	186.1	\\
Golden	10	&	324	&	41	&	4	&	\bf{	400	}	&	 	401	 	&	 	402.4	 	&	330	&	\bf{	400	}	&	 	401.9	 	&	29.7	&	\bf{	400	}	&	 	400.2	 	&	51.8	&	83	&	134.8	\\
Golden	10	&	324	&	47	&	4	&	\bf{	398	}	&	 	399	 	&	 	400.1	 	&	339.3	&	 	399	 	&	 	400.1	 	&	31.6	&	\bf{	398	}	&	 	398.1	 	&	52.7	&	74	&	126.7	\\
Golden	10	&	324	&	54	&	4	&	\bf{	393	}	&	 	395	 	&	 	396.9	 	&	336.3	&	 	394	 	&	 	395.3	 	&	34.9	&	\bf{	393	}	&	 	393.6	 	&	54.5	&	72	&	126.5	\\
Golden	10	&	324	&	65	&	4	&	\bf{	387	}	&	\bf{	387	}	&	 	391.7	 	&	355.7	&	 	390	 	&	 	392	 	&	40.9	&	\bf{	387	}	&	 	387.9	 	&	75.5	&	62	&	137.5	\\
\hline																																														
Golden	11	&	400	&	27	&	5	&	\bf{	457	}	&	\bf{	457	}	&	 	458.1	 	&	596.1	&	 	458	 	&	 	459	 	&	42.4	&	\bf{	457	}	&	\bf{	457	}	&	38.3	&	238	&	313.5	\\
Golden	11	&	400	&	29	&	5	&	\bf{	455	}	&	\bf{	455	}	&	 	457.8	 	&	586.6	&	 	456	 	&	 	458.8	 	&	42.7	&	\bf{	455	}	&	\bf{	455	}	&	44.5	&	222	&	222	\\
Golden	11	&	400	&	31	&	5	&	\bf{	455	}	&	\bf{	455	}	&	 	457.1	 	&	615.1	&	 	457	 	&	 	459.1	 	&	42.2	&	\bf{	455	}	&	 	455.4	 	&	49.8	&	217	&	217	\\
Golden	11	&	400	&	34	&	5	&	\bf{	455	}	&	 	456	 	&	 	457.4	 	&	648.9	&	 	457	 	&	 	458.7	 	&	40.8	&	\bf{	455	}	&	 	455.2	 	&	49.6	&	188	&	188	\\
Golden	11	&	400	&	37	&	5	&	\bf{	459	}	&	 	460	 	&	 	461.8	 	&	693.7	&	 	461	 	&	 	462.6	 	&	41	&	\bf{	459	}	&	\bf{	459	}	&	49.9	&	163	&	163	\\
Golden	11	&	400	&	40	&	5	&	\bf{	461	}	&	 	462	 	&	 	463.8	 	&	677.7	&	 	463	 	&	 	463.5	 	&	41.3	&	\bf{	461	}	&	\bf{	461	}	&	48.4	&	142	&	180.3	\\
Golden	11	&	400	&	45	&	5	&	\bf{	462	}	&	 	464	 	&	 	465.3	 	&	636.4	&	 	465	 	&	 	465.7	 	&	43.5	&	\bf{	462	}	&	 	462.1	 	&	52.7	&	270	&	314.5	\\
Golden	11	&	400	&	50	&	5	&	\bf{	458	}	&	 	459	 	&	 	461.4	 	&	677.3	&	 	460	 	&	 	461.2	 	&	45.9	&	\bf{	458	}	&	 	458.5	 	&	57.8	&	123	&	172.8	\\
Golden	11	&	400	&	58	&	5	&	\bf{	456	}	&	 	458	 	&	 	459.9	 	&	701.1	&	 	460	 	&	 	462	 	&	49.6	&	\bf{	456	}	&	 	456.4	 	&	80.3	&	105	&	154.6	\\
Golden	11	&	400	&	67	&	5	&	\bf{	454	}	&	 	457	 	&	 	459.2	 	&	733.8	&	 	458	 	&	 	460.5	 	&	59.7	&	\bf{	454	}	&	 	454.6	 	&	85.7	&	91	&	140.9	\\
Golden	11	&	400	&	80	&	5	&	\bf{	451	}	&	 	455	 	&	 	457.2	 	&	676.3		& 	456	 	&	 	458.6	 	&	67	&	\bf{	451	}	&	 	451.7	 	&	98.8	&	63	&	111.4	\\
\hline																																														
Golden	12	&	484	&	33	&	5	&	\bf{	535	}	&	\bf{	535	}	&	 	538.1	 	&	1338.3	&	 	537	 	&	 	538.4	 	&	63	&	\bf{	535	}	&	\bf{	535	}	&	80.3	&	270	&	322.7	\\
Golden	12	&	484	&	35	&	5	&	\bf{	537	}	&	\bf{	537	}	&	 	538.4	 	&	1367.3	&	\bf{	537	}	&	 	539.7	 	&	60.4	&	\bf{	537	}	&	\bf{	537	}	&	62.6	&	255	&	312.8	\\
Golden	12	&	484	&	38	&	5	&	\bf{	535	}	&	\bf{	535	}	&	 	540	 	&	1327.9	&	 	537	 	&	 	540.5	 	&	61.9	&	\bf{	535	}	&	\bf{	535	}	&	70.3	&	231	&	311.3	\\
Golden	12	&	484	&	41	&	5	&	\bf{	537	}	&	 	539	 	&	 	542.5	 	&	1516.8	&	 	538	 	&	 	542.7	 	&	63.9	&	\bf{	537	}	&	\bf{	537	}	&	70.9	&	200	&	285.7	\\
Golden	12	&	484	&	44	&	5	&	\bf{	535	}	&	 	537	 	&	 	540.3	 	&	1406.3	&	 	540	 	&	 	542.2	 	&	62.4	&	\bf{	535	}	&	 	535.8	 	&	87.4	&	194	&	292.8	\\
Golden	12	&	484	&	49	&	5	&	\bf{	533	}	&	 	537	 	&	 	538.3	 	&	1443	&	 	538	 	&	 	540.9	 	&	68.9	&	\bf{	533	}	&	 	533.4	 	&	104	&	192	&	272.3	\\
Golden	12	&	484	&	54	&	5	&	\bf{	535	}	&	 	537		&	 	540.2	 	&	1679.6	&	 	539	 	&	 	542.9	 	&	69.5	&	\bf{	535	}	&	 	535.3	 	&	92.1	&	1656	&	1718.6	\\
Golden	12	&	484	&	61	&	5	&		538		&		538		&		541.3	 	&	1453.1	&		539	 	&	 	544.8	 	&	72.7	&	\bf{\underline{	535	}}	&		535.2		&	89.4	&	152	&	222.3	\\
Golden	12	&	484	&	70	&	5	&		546		&		538		&		540		&	1519.4	&		540		&		543.6		&	80.5	&	\bf{\underline{	533	}}	&		533.5		&	104.4	&	131	&	201.9	\\
Golden	12	&	484	&	81	&	5	&		546		&		541		&		543.4		&	1296.2	&		544		&		547.5		&	85.4	&	\bf{\underline{	535	}}	&		536		&	133.7	&	93	&	180.4	\\
Golden	12	&	484	&	97	&	5	&		560		&		548		&		553		&	1275.5	&		548		&		554.5		&	105.9	&	\bf{\underline{	544	}}	&		544.7		&	139.6	&	56	&	160	\\
\hline																																														
\end{tabular}																																														

 \end{table} 
\end{landscape}
\begin{landscape} 
 \begin{table}[!htb]\center \caption{Detailed results, Golden 13--16.}\label{TabAll4} \tiny
\setlength{\extrarowheight}{1.5pt}
 \begin{tabular}{l|ccc|c|ccc|ccc|ccccc}																																														
	&	&	&	&	Exact	&	\multicolumn{3}{c|}{\ILS}	&	\multicolumn{3}{c|}{\ILSClu}	&	\multicolumn{5}{c}{\Gen}\\
	Inst.	&	$n$	&	$|\mathcal{C}|$	&	$m$	&	&	Best	&	Avg.	&		Avg.		&		Best		&	Avg.	&		Avg.		&		Best		&	Avg.	&		Avg.		&		Preproc.		&	Total\\		&	&	&	&	&	&	&Time(s)	&	&	&	Time(s)		&		&		&	Time(s)&		(s)		&		Time(s)\\																		
\hline																																														
Golden	13	&	253	&	17	&	4	&	\textbf{	552	}	&	\bf{	552	}	&	\bf{	552	}	&	144.7	&	\bf{	552	}	&	 	553.8	 	&	21	&	\bf{	552	}	&	\bf{	552	}	&	15.8	&	159	&	251.1	\\
Golden	13	&	253	&	19	&	4	&	\textbf{	549	}	&	\bf{	549	}	&	 	549.1	 	&	138.4	&	 	551	 	&	 	551.3	 	&	20.7	&	\bf{	549	}	&	\bf{	549	}	&	16.4	&	128	&	217.4	\\
Golden	13	&	253	&	20	&	4	&	\textbf{	548	}	&	\bf{	548	}	&	 	548.3	 	&	140.9	&	\bf{	548	}	&	 	549.5	 	&	19.7	&	\bf{	548	}	&	\bf{	548	}	&	17.9	&	127	&	231.4	\\
Golden	13	&	253	&	22	&	4	&	\textbf{	548	}	&	\bf{	548	}	&	 	548.4	 	&	141.2	&	 	549	 	&	 	549.5	 	&	20.5	&	\bf{	548	}	&	\bf{	548	}	&	20.3	&	110	&	243.7	\\
Golden	13	&	253	&	23	&	4	&	\textbf{	548	}	&	\bf{	548	}	&	 	548.5	 	&	138.5	&	\bf{	548	}	&	 	549.1	 	&	20.1	&	\bf{	548	}	&	\bf{	548	}	&	19.6	&	107	&	246.6	\\
Golden	13	&	253	&	26	&	4	&	\textbf{	542	}	&	\bf{	542	}	&	 	542.1	 	&	146.6	&	\bf{	542	}	&	 	542.6	 	&	20.2	&	\bf{	542	}	&	\bf{	542	}	&	20.4	&	94	&	109.8	\\
Golden	13	&	253	&	29	&	4	&	\textbf{	540	}	&	\bf{	540	}	&	 	540.3	 	&	147.7	&	\bf{	540	}	&	 	540.9	 	&	21.3	&	\bf{	540	}	&	\bf{	540	}	&	21.8	&	218	&	234.4	\\
Golden	13	&	253	&	32	&	4	&	\textbf{	543	}	&	\bf{	543	}	&	 	543.7	 	&	145.7	&	 	544	 	&	 	544.9	 	&	21.3	&	\bf{	543	}	&	\bf{	543	}	&	23.7	&	72	&	89.9	\\
Golden	13	&	253	&	37	&	4	&	\textbf{	545	}	&	 	546	 	&	 	547.9	 	&	147.4	&	 	546	 	&	 	548.8	 	&	21.7	&	\bf{	545	}	&	 	545.2	 	&	33.7	&	54	&	74.3	\\
Golden	13	&	253	&	43	&	4	&	\textbf{	553	}	&	 	554	 	&	 	555.1	 	&	146.4	&	 	554	 	&	 	555.6	 	&	25.3	&	\bf{	553	}	&	\bf{	553	}	&	29.2	&	44	&	63.6	\\
Golden	13	&	253	&	51	&	4	&	\textbf{	560	}	&	 	561	 	&	 	562.1	 	&	164.9	&	 	561	 	&	 	563	 	&	28.6	&	\bf{	560	}	&	 	560.4	 	&	37.9	&	29	&	49.4	\\
\hline																																														
Golden	14	&	321	&	22	&	4	&	\textbf{	692	}	&	\bf{	692	}	&	 	692.8	 	&	320.7	&	 	693	 	&	 	695	 	&	35.4	&	\bf{	692	}	&	\bf{	692	}	&	25.6	&	214	&	235.8	\\
Golden	14	&	321	&	23	&	4	&	\textbf{	688	}	&	\bf{	688	}	&	 	688.3	 	&	330.6	&	\bf{	688	}	&	 	689.7	 	&	32.2	&	\bf{	688	}	&	\bf{	688	}	&	27.3	&	181	&	204.7	\\
Golden	14	&	321	&	25	&	4	&	\textbf{	678	}	&	\bf{	678	}	&	 	679.1	 	&	317.3	&	 	679	 	&	 	680	 	&	31.2	&	\bf{	678	}	&	\bf{	678	}	&	29	&	169	&	202.7	\\
Golden	14	&	321	&	27	&	4	&	\textbf{	676	}	&	\bf{	676	}	&	 	677.6	 	&	317.3	&	\bf{	676	}	&	 	678.6	 	&	31.1	&	\bf{	676	}	&	\bf{	676	}	&	29.3	&	147	&	176.2	\\
Golden	14	&	321	&	30	&	4	&	\textbf{	678	}	&	 	680	 	&	 	680.5	 	&	319.3	&	 	682	 	&	 	682.3	 	&	30.5	&	\bf{	678	}	&	\bf{	678	}	&	32.2	&	128	&	165.9	\\
Golden	14	&	321	&	33	&	4	&	\textbf{	682	}	&	\bf{	682	}	&	 	683.6	 	&	341.5	&	 	684	 	&	 	685.3	 	&	31	&	\bf{	682	}	&	\bf{	682	}	&	32.9	&	118	&	143.6	\\
Golden	14	&	321	&	36	&	4	&	\textbf{	687	}	&	\bf{	687	}	&	 	688.3	 	&	349.3	&	 	688	 	&	 	689.5	 	&	33	&	\bf{	687	}	&	\bf{	687	}	&	33.7	&	163	&	190.3	\\
Golden	14	&	321	&	41	&	4	&	\textbf{	690	}	&	 	691	 	&	 	692.4	 	&	339.7	&	 	691	 	&	 	692.9	 	&	33.7	&	\bf{	690	}	&	 	690.1	 	&	50.2	&	83	&	112	\\
Golden	14	&	321	&	46	&	4	&	\textbf{	694	}	&	 	697	 	&	 	698	 	&	352	&	 	697	 	&	 	699	 	&	34.1	&	\bf{	694	}	&	 	695.7	 	&	48.5	&	65	&	94.3	\\
Golden	14	&	321	&	54	&	4	&	\textbf{	699	}	&	 	701	 	&	 	703	 	&	339	&	 	703	 	&	 	704.2	 	&	38.2	&	\bf{	699	}	&	 	700.1	 	&	52.7	&	53	&	85.2	\\
Golden	14	&	321	&	65	&	4	&	\textbf{	703	}	&	\bf{	703	}	&	 	706.3	 	&	342.7	&	 	704	 	&	 	705.9	 	&	47	&	\bf{	703	}	&	\bf{	703	}	&	58	&	37	&	69.9	\\
\hline																																														
Golden	15	&	397	&	27	&	4	&	\textbf{	842	}	&	 	844	 	&	 	844.2	 	&	728.5	&	 	844	 	&	 	846.3	 	&	54.9	&	\bf{	842	}	&	\bf{	842	}	&	47.5	&	244	&	277.7	\\
Golden	15	&	397	&	29	&	4	&	\textbf{	843	}	&	 	844	 	&	 	846.4	 	&	736.6	&	 	845	 	&	 	849.2	 	&	52.2	&	\bf{	843	}	&	 	843.4	 	&	52.5	&	218	&	268.2	\\
Golden	15	&	397	&	31	&	4	&	\textbf{	837	}	&	 	839	 	&	 	840.9	 	&	742.7	&	 	841	 	&	 	843.3	 	&	48.5	&	\bf{	837	}	&	 	837.1	 	&	49.3	&	203	&	251.5	\\
Golden	15	&	397	&	34	&	4	&	\textbf{	838	}	&	 	842	 	&	 	844.2	 	&	755.2	&	 	844	 	&	 	846.6	 	&	48.2	&	\bf{	838	}	&	 	838.5	 	&	72.4	&	169	&	221.7	\\
Golden	15	&	397	&	37	&	4	&	\textbf{	845	}	&	 	848	 	&	 	850.1	 	&	739.6	&	 	848	 	&	 	853.5	 	&	48.5	&	\bf{	845	}	&	 	845.2	 	&	51.6	&	152	&	210	\\
Golden	15	&	397	&	40	&	4	&	\textbf{	849	}	&	\bf{	849	}	&	 	851.9	 	&	744.5	&	 	850	 	&	 	853	 	&	50	&	\bf{	849	}	&	 	849.2	 	&	65.3	&	141	&	188.5	\\
Golden	15	&	397	&	45	&	5	&	\textbf{	853	}	&	 	856	 	&	 	857.8	 	&	655.4	&	 	855	 	&	 	858.4	 	&	45.5	&	\bf{	853	}	&	 	853.1	 	&	58.5	&	1033	&	1085.5	\\
Golden	15	&	397	&	50	&	5	&	\textbf{	851	}	&	 	854	 	&	 	855.7	 	&	710.7	&	 	857	 	&	 	859	 	&	48.8	&	\bf{	851	}	&	 	851.8	 	&	68.1	&	111	&	160.3	\\
Golden	15	&	397	&	57	&	5	&	\textbf{	850	}	&	 	854	 	&	 	856.4	 	&	700.3	&	 	856	 	&	 	858.3	 	&	50.6	&	\bf{	850	}	&	 	850.4	 	&	83.5	&	94	&	166.4	\\
Golden	15	&	397	&	67	&	5	&	\textbf{	855	}	&	 	857	 	&	 	862.1	 	&	685.4	&	 	861	 	&	 	862.9	 	&	57.2	&	 	857	 	&	 	857.3	 	&	82.9	&	73	&	124.6	\\
Golden	15	&	397	&	80	&	5	&	\textbf{	857	}	&	 	863	 	&	 	864.3	 	&	648.1	&	 	862	 	&	 	864.2	 	&	70.9	&	 	858	 	&	 	859.6	 	&	94.3	&	51	&	116.3	\\
\hline																																														
Golden	16	&	481	&	33	&	5	&	\textbf{	1030	}	&	\bf{	1030	}	&	\bf{	1030	}	&	1268.8	&	\bf{	1030	}	&	 	1031.2	 	&	67.7	&	\bf{	1030	}	&	\bf{	1030	}	&	54.7	&	296	&	354.5	\\
Golden	16	&	481	&	35	&	5	&	\textbf{	1028	}	&	\bf{	1028	}	&	 	1029.4	 	&	1238	&	\bf{	1028	}	&	 	1030.6	 	&	65.5	&	\bf{	1028	}	&	\bf{	1028	}	&	59.5	&	251	&	319.1	\\
Golden	16	&	481	&	37	&	5	&	\textbf{	1028	}	&	\bf{	1028	}	&	 	1029	 	&	1288.6	&	\bf{	1028	}	&	 	1029.7	 	&	63.1	&	\bf{	1028	}	&	\bf{	1028	}	&	60.9	&	238	&	321.5	\\
Golden	16	&	481	&	41	&	5	&	\textbf{	1032	}	&	 	1033	 	&	 	1034.3	 	&	1329.9	&	 	1033	 	&	 	1034.6	 	&	64.8	&	\bf{	1032	}	&	\bf{	1032	}	&	60.1	&	199	&	281.9	\\
Golden	16	&	481	&	44	&	5	&	\textbf{	1028	}	&	 	1029	 	&	 	1031.2	 	&	1297	&	 	1032	 	&	 	1033.3	 	&	64.6	&	\bf{	1028	}	&	\bf{	1028	}	&	63.8	&	179	&	273.3	\\
Golden	16	&	481	&	49	&	5	&	\textbf{	1031	}	&	 	1033	 	&	 	1034.6	 	&	1264.7	&	 	1034	 	&	 	1036	 	&	65.4	&	\bf{	1031	}	&	\bf{	1031	}	&	71.2	&	161	&	215.7	\\
Golden	16	&	481	&	54	&	5	&	\textbf{	1022	}	&	 	1024		&	 	1026		&	1505.4	&	 	1027		&	 	1028.8		&	69.2	&	\bf{	1022	}	&	\bf{	1022	}	&	83.9	&	214	&	273.5	\\
Golden	16	&	481	&	61	&	5	&	\textbf{	1013	}	&		1015		&		1018.7		&	1498.7	&		1022		&		1023.1		&	74	&	\bf{	1013	}	&	 	1013.8	 	&	94.7	&	143	&	203.9	\\
Golden	16	&	481	&	69	&	5	&	\textbf{	1012	}	&		1015		&		1017.7		&	1525.2	&		1020		&		1020.8		&	79.4	&	\bf{	1012	}	&	 	1012.3	 	&	114.5	&	120	&	180.1	\\
Golden	16	&	481	&	81	&	5	&	\textbf{	1018	}	&		1019		&		1023.7		&	1312	&		1023		&		1026.5		&	93.2	&	\bf{	1018	}	&	\bf{	1018	}	&	105.3	&	94	&	157.8	\\
Golden	16	&	481	&	97	&	5	&	\textbf{	1018	}	&		1023		&		1027.1		&	1258.1	&		1027		&		1029.9		&	110.9	&	\bf{	1018	}	&	 	1020	 	&	158.8	&	57	&	128.2	\\
\hline																	
\end{tabular}

 \end{table} 
\end{landscape} 
 \begin{landscape} 
 \begin{table}[!htb]\center \caption{Detailed results, Golden 17--20.}\label{TabAll5} \tiny
\setlength{\extrarowheight}{1.5pt}
 \begin	{tabular}{l|ccc|c|ccc|ccc|ccccc}																		&		&		&		&		Exact	&	\multicolumn		{3}{c|}{\ILS}								&		\multicolumn{3}{c|}{\ILSClu}									&	\multicolumn{5}{c}{\Gen}\\		
Inst.	& $n$	&	$|\mathcal{C}|$	&	$m$	&			&	Best	&		Avg.		&		Avg.		&	Best	&		Avg.		&		Avg. &		Best	&	Avg.		&		Avg. &		Preproc.			&	Total\\		
	&   &			&	&		&		&           &Time(s)   &		&		&		Time(s)		&		&	& Time(s)&   (s)				& Time(s)\\	
	\hline																	
Golden	17	&	241	&	17	&	3	&	\textbf{	418	}	&	\bf{	418	}	&	\bf{	418	}	&	177.8	&	\bf{	418	}	&	 	418.1	 	&	32.3	&	\bf{	418	}	&	\bf{	418	}	&	15.4	&	209	&	292.9	\\
Golden	17	&	241	&	18	&	3	&	\textbf{	419	}	&	\bf{	419	}	&	\bf{	419	}	&	176.2	&	\bf{	419	}	&	\bf{	419	}	&	30.9	&	\bf{	419	}	&	\bf{	419	}	&	17.2	&	192	&	286.7	\\
Golden	17	&	241	&	19	&	3	&	\textbf{	422	}	&	\bf{	422	}	&	\bf{	422	}	&	169.4	&	\bf{	422	}	&	\bf{	422	}	&	31	&	\bf{	422	}	&	\bf{	422	}	&	17.8	&	172	&	286.5	\\
Golden	17	&	241	&	21	&	3	&	\textbf{	425	}	&	\bf{	425	}	&	\bf{	425	}	&	171.1	&	\bf{	425	}	&	\bf{	425	}	&	30.2	&	\bf{	425	}	&	\bf{	425	}	&	20	&	162	&	267.3	\\
Golden	17	&	241	&	22	&	3	&	\textbf{	424	}	&	\bf{	424	}	&	\bf{	424	}	&	179.5	&	\bf{	424	}	&	 	424.1	 	&	31.3	&	\bf{	424	}	&	\bf{	424	}	&	20.1	&	155	&	313.8	\\
Golden	17	&	241	&	25	&	3	&	\textbf{	418	}	&	\bf{	418	}	&	\bf{	418	}	&	173.3	&	\bf{	418	}	&	 	418.4	 	&	25.9	&	\bf{	418	}	&	\bf{	418	}	&	21.9	&	111	&	126.4	\\
Golden	17	&	241	&	27	&	3	&	\textbf{	414	}	&	\bf{	414	}	&	\bf{	414	}	&	165.3	&	\bf{	414	}	&	\bf{	414	}	&	25	&	\bf{	414	}	&	\bf{	414	}	&	22.9	&	81	&	98.2	\\
Golden	17	&	241	&	31	&	4	&	\textbf{	421	}	&	\bf{	421	}	&	 	421.1	 	&	132	&	\bf{	421	}	&	 	421.3	 	&	20.9	&	\bf{	421	}	&	\bf{	421	}	&	21	&	73	&	90.8	\\
Golden	17	&	241	&	35	&	4	&	\textbf{	417	}	&	\bf{	417	}	&	 	417.1	 	&	135.9	&	\bf{	417	}	&	 	417.4	 	&	20.8	&	\bf{	417	}	&	\bf{	417	}	&	22	&	53	&	73	\\
Golden	17	&	241	&	41	&	4	&	\textbf{	412	}	&	\bf{	412	}	&	 	412.1	 	&	134.7	&	\bf{	412	}	&	\bf{	412	}	&	23.8	&	\bf{	412	}	&	\bf{	412	}	&	24.6	&	36	&	56.1	\\
Golden	17	&	241	&	49	&	4	&	\textbf{	414	}	&	\bf{	414	}	&	 	414.1	 	&	138.5	&	\bf{	414	}	&	 	414.7	 	&	27	&	\bf{	414	}	&	\bf{	414	}	&	27.3	&	32	&	53.9	\\
\hline																																														
Golden	18	&	301	&	21	&	4	&	\textbf{	592	}	&	\bf{	592	}	&	\bf{	592	}	&	304.1	&	\bf{	592	}	&	 	593.6	 	&	41.1	&	\bf{	592	}	&	\bf{	592	}	&	22	&	329	&	351.9	\\
Golden	18	&	301	&	22	&	4	&	\textbf{	594	}	&	\bf{	594	}	&	\bf{	594	}	&	318.3	&	\bf{	594	}	&	 	595.6	 	&	39	&	\bf{	594	}	&	\bf{	594	}	&	22.5	&	300	&	321	\\
Golden	18	&	301	&	24	&	4	&	\textbf{	592	}	&	\bf{	592	}	&	 	592.1	 	&	323.4	&	\bf{	592	}	&	 	593.7	 	&	41.2	&	\bf{	592	}	&	\bf{	592	}	&	23	&	294	&	316	\\
Golden	18	&	301	&	26	&	4	&	\textbf{	590	}	&	\bf{	590	}	&	\bf{	590	}	&	319.6	&	\bf{	590	}	&	 	590.9	 	&	36	&	\bf{	590	}	&	\bf{	590	}	&	24.5	&	229	&	253.6	\\
Golden	18	&	301	&	28	&	4	&	\textbf{	577	}	&	\bf{	577	}	&	\bf{	577	}	&	317.9	&	\bf{	577	}	&	 	577.4	 	&	35.6	&	\bf{	577	}	&	\bf{	577	}	&	26.3	&	164	&	191.3	\\
Golden	18	&	301	&	31	&	4	&	\textbf{	578	}	&	\bf{	578	}	&	\bf{	578	}	&	302.2	&	\bf{	578	}	&	 	578.7	 	&	35.3	&	\bf{	578	}	&	\bf{	578	}	&	28.8	&	136	&	158	\\
Golden	18	&	301	&	34	&	4	&	\textbf{	582	}	&	\bf{	582	}	&	 	582.1	 	&	305.7	&	\bf{	582	}	&	 	582.1	 	&	34.2	&	\bf{	582	}	&	\bf{	582	}	&	29.6	&	112	&	134.5	\\
Golden	18	&	301	&	38	&	4	&	\textbf{	586	}	&	 	587	 	&	 	587.3	 	&	301.8	&	\bf{	586	}	&	 	587	 	&	35.6	&	\bf{	586	}	&	\bf{	586	}	&	34.1	&	100	&	123	\\
Golden	18	&	301	&	43	&	4	&	\textbf{	594	}	&	\bf{	594	}	&	 	594.8	 	&	303.5	&	\bf{	594	}	&	 	594.5	 	&	36	&	\bf{	594	}	&	\bf{	594	}	&	35.5	&	77	&	101.5	\\
Golden	18	&	301	&	51	&	4	&	\textbf{	601	}	&	\bf{	601	}	&	 	601.9	 	&	309.5	&	\bf{	601	}	&	 	601.9	 	&	39.4	&	\bf{	601	}	&	\bf{	601	}	&	43.6	&	51	&	77.3	\\
Golden	18	&	301	&	61	&	4	&	\textbf{	599	}	&	\bf{	599	}	&	 	599.6	 	&	299.5	&	\bf{	599	}	&	 	600.1	 	&	43.8	&	\bf{	599	}	&	\bf{	599	}	&	48.4	&	47	&	75.8	\\
\hline																																														
Golden	19	&	361	&	25	&	10	&	\textbf{	925	}	&	\bf{	925	}	&	\bf{	925	}	&	342.5	&	 	926	 	&	 	926.4	 	&	55.1	&	\bf{	925	}	&	\bf{	925	}	&	13.8	&	607	&	636.6	\\
Golden	19	&	361	&	26	&	10	&	\textbf{	924	}	&	\bf{	924	}	&	\bf{	924	}	&	335.9	&	\bf{	924	}	&	 	925.2	 	&	52.2	&	\bf{	924	}	&	\bf{	924	}	&	13.9	&	519	&	553.1	\\
Golden	19	&	361	&	28	&	4	&	\textbf{	808	}	&	\bf{	808	}	&	 	808.6	 	&	658.1	&	 	809	 	&	 	810.2	 	&	63.3	&	\bf{	808	}	&	\bf{	808	}	&	35.1	&	389	&	424.5	\\
Golden	19	&	361	&	31	&	4	&	\textbf{	811	}	&	\bf{	811	}	&	 	812	 	&	597.7	&	 	812	 	&	 	813	 	&	56.7	&	\bf{	811	}	&	 	811.2	 	&	48.6	&	288	&	331.6	\\
Golden	19	&	361	&	33	&	4	&	\textbf{	797	}	&	\bf{	797	}	&	 	798.3	 	&	609.5	&	\bf{	797	}	&	 	798.3	 	&	51.4	&	\bf{	797	}	&	\bf{	797	}	&	43.3	&	201	&	249.4	\\
Golden	19	&	361	&	37	&	5	&	\textbf{	799	}	&	\bf{	799	}	&	 	799.9	 	&	539.1	&	\bf{	799	}	&	 	800.4	 	&	46.3	&	\bf{	799	}	&	\bf{	799	}	&	38.1	&	147	&	160.8	\\
Golden	19	&	361	&	41	&	5	&	\textbf{	789	}	&	\bf{	789	}	&	\bf{	789	}	&	532.6	&	\bf{	789	}	&	\bf{	789	}	&	45.6	&	\bf{	789	}	&	\bf{	789	}	&	36.1	&	123	&	136.9	\\
Golden	19	&	361	&	46	&	5	&	\textbf{	788	}	&	\bf{	788	}	&	\bf{	788	}	&	529.3	&	\bf{	788	}	&	\bf{	788	}	&	44.3	&	\bf{	788	}	&	\bf{	788	}	&	37.7	&	116	&	151.1	\\
Golden	19	&	361	&	52	&	5	&	\textbf{	800	}	&	\bf{	800	}	&	\bf{	800	}	&	536.6	&	\bf{	800	}	&	 	800.2	 	&	47.5	&	\bf{	800	}	&	\bf{	800	}	&	42.9	&	109	&	157.6	\\
Golden	19	&	361	&	61	&	5	&	\textbf{	807	}	&	\bf{	807	}	&	 	808.6	 	&	521.5	&	\bf{	807	}	&	 	807.8	 	&	51.6	&	\bf{	807	}	&	\bf{	807	}	&	48.4	&	85	&	128.3	\\
Golden	19	&	361	&	73	&	5	&	\textbf{	810	}	&	 	811	 	&	 	812.5	 	&	553.6	&	 	811	 	&	 	812.1	 	&	62.7	&	\bf{	810	}	&	 	810.1	 	&	67.2	&	71	&	109.1	\\
\hline																																														
Golden	20	&	421	&	29	&	11	&	\textbf{	1220	}	&	\bf{	1220	}	&	\bf{	1220	}	&	529	&	 	1221	 	&	 	1226.1	 	&	74	&	\bf{	1220	}	&	\bf{	1220	}	&	21	&	821	&	857.1	\\
Golden	20	&	421	&	31	&	12	&	\textbf{	1232	}	&	\bf{	1232	}	&	 	1232.1	 	&	506.2	&	 	1235	 	&	 	1237.2	 	&	73.8	&	\bf{	1232	}	&	\bf{	1232	}	&	20.7	&	536	&	573.7	\\
Golden	20	&	421	&	33	&	12	&	\textbf{	1208	}	&	\bf{	1208	}	&	 	1208.2	 	&	490.5	&	 	1212	 	&	 	1212.9	 	&	68.3	&	\bf{	1208	}	&	\bf{	1208	}	&	22.2	&	444	&	486.9	\\
Golden	20	&	421	&	36	&	5	&	\textbf{	1059	}	&	 	1060	 	&	 	1062	 	&	1045.9	&	 	1060	 	&	 	1064.8	 	&	85.5	&	\bf{	1059	}	&	\bf{	1059	}	&	44.1	&	394	&	442.4	\\
Golden	20	&	421	&	39	&	5	&	\textbf{	1052	}	&	\bf{	1052	}	&	 	1053.2	 	&	1083	&	\bf{	1052	}	&	 	1054.5	 	&	78.3	&	\bf{	1052	}	&	\bf{	1052	}	&	45	&	260	&	327.2	\\
Golden	20	&	421	&	43	&	5	&	\textbf{	1052	}	&	\bf{	1052	}	&	 	1053.8	 	&	1034.2	&	 	1053	 	&	 	1055.6	 	&	73.7	&	\bf{	1052	}	&	\bf{	1052	}	&	48.8	&	209	&	230	\\
Golden	20	&	421	&	47	&	5	&	\textbf{	1053	}	&	 	1054	 	&	 	1055.2	 	&	1038.1	&	 	1055	 	&	 	1056	 	&	73.4	&	\bf{	1053	}	&	\bf{	1053	}	&	60.1	&	78	&	98.7	\\
Golden	20	&	421	&	53	&	5	&	\textbf{	1058	}	&		1058		&		1059.8		&	834	&		1060		&		1061.1		&	75.6	&	\bf{	1058	}	&	\bf{	1058	}	&	55.9	&	207	&	229.2	\\
Golden	20	&	421	&	61	&	5	&	\textbf{	1058	}	&		1058		&		1059.7		&	1050.4	&		1060		&		1061.1		&	76.1	&	\bf{	1058	}	&	\bf{	1058	}	&	62.4	&	203	&	247.1	\\
Golden	20	&	421	&	71	&	5	&	\textbf{	1049	}	&		1059		&		1060.4		&	1049.1	&		1059		&		1061.1		&	83.5	&	 	1059	 	&	 	1059	 	&	71.5	&	186	&	231	\\
Golden	20	&	421	&	85	&	5	&	\textbf{	1049	}	&		1050		&		1052		&	1086.5	&		1050		&		1051.5		&	91.3	&	\bf{	1049	}	&	\bf{	1049	}	&	93.9	&	97	&	145.8	\\
\hline																													\end{tabular}																																													

 \end{table} 
\end{landscape}

\begin{landscape} 
 \begin{table}[!htb]\center \caption{Detailed results, GVRP2}\label{TabAllGVRP2} \tiny
\setlength{\extrarowheight}{1.5pt}
 \begin{tabular}{l|ccc|c|ccc|ccc|ccccc}	&	&	&	&	Exact	&	\multicolumn{3}{c|}{\ILS}	&	\multicolumn{3}{c|}{\ILSClu}	&	\multicolumn{5}{c}{\Gen}\\		Inst.		&		$n$		&	$|\mathcal{C}|$	&		$m$		&		&		Best	&	Avg.		&		Avg.		&		Best	&	Avg.	&	Avg.	&	Best	&	Avg.	&	Avg.	&	Preproc.	&	Total\\	&	&	&	&	&	&	&Time(s)	&	&	&	Time(s)	&	&	&	Time(s)&	(s)	&	Time(s)\\
\hline
G	&	262	&	131	&	12	&	--			&		3736		&		3749.6		&	127.1	&		3709		&		3718		&	66.8	&		\textbf{\underline{3693}}		&		3711		&	230.9	&	9	&	239.9	\\																													
CMT	&	101	&	51	&	5	&	\bf{	642	}	&	\bf{	642	}	&	\bf{	642	}	&	5.1	&	\bf{	642	}	&	\bf{	642	}	&	8.5	&	\bf{	642	}	&		642		&	9.2	&	3	&	12.2	\\																													
CMT	&	121	&	61	&	4	&	\bf{	807	}	&	\bf{	807	}	&		809.2		&	15.7	&	\bf{	807	}	&	\bf{	807	}	&	16.3	&	\bf{	807	}	&		808.3		&	22.8	&	4	&	26.8	\\																													
CMT	&	151	&	76	&	6	&	\bf{	816	}	&	\bf{	816	}	&		817.5		&	22.2	&	\bf{	816	}	&	\bf{	816	}	&	21.8	&	\bf{	816	}	&		817.6		&	27.4	&	4	&	31.4	\\																													
CMT	&	200	&	100	&	8	&		994		&		965		&		971.6		&	54.3	&		\bf{\underline{955}}		&		961.4		&	35.6	&		960		&		964.2		&	89.8	&	8	&	97.8	\\
\hline	\end{tabular}

 \end{table}  

 \begin{table}[!htb]\center \caption{Detailed results, GVRP3}\label{TabAllGVRP3} \tiny
\setlength{\extrarowheight}{1.5pt}
 \begin{tabular}{l|ccc|c|ccc|ccc|ccccc}	&	&	&	&	Exact	&	\multicolumn{3}{c|}{\ILS}	&	\multicolumn{3}{c|}{\ILSClu}	&	\multicolumn{5}{c}{\Gen}\\																																		
Inst.	&	$n$	&	$|\mathcal{C}|$	&	$m$	&	&	Best	&	Avg.		&		Avg.		&		Best	&	Avg.		&		Avg.		&		Best	&	Avg.		&		Avg.		&		Preproc.	&	Total\\				
	&	&	&	&	&	&	&Time(s)	&	&	&		Time(s)		&		&	&	Time(s)&		(s)		&		Time(s)\\																
\hline																																													
G	&	262	&	88	&	9	&	3596			&		3294		&		3305.9		&	128	&		\bf{\underline{3290}}		&		3291.8		&	40.2	&		\bf{\underline{3290}}		&		3293.2		&	69.6	&	21	&	90.6	\\
CMT	&	101	&	34	&	4	&	\bf{	607	}	&	\bf{	607	}	&	\bf{	607	}	&	7.4	&	\bf{	607	}	&	\bf{	607	}	&	7.1	&	\bf{	607	}	&	\bf{	607	}	&	7.3	&	8	&	15.3	\\
CMT	&	121	&	41	&	3	&	\bf{	691	}	&		694		&		695.4		&	14.3	&	\bf{	691	}	&	\bf{	691	}	&	12.6	&	\bf{	691	}	&	\bf{	691	}	&	17.8	&	9	&	26.8	\\
CMT	&	151	&	51	&	4	&	\bf{	804	}	&	\bf{	804	}	&	\bf{	804	}	&	29.5	&	\bf{	804	}	&	\bf{	804	}	&	16.2	&	\bf{	804	}	&	\bf{	804	}	&	18.4	&	12	&	30.4	\\
CMT	&	200	&	67	&	6	&	\bf{	908	}	&	\bf{	908	}	&	\bf{	908	}	&	59.5	&	\bf{	908	}	&	\bf{	908	}	&	24.4	&	\bf{	908	}	&	\bf{	908	}	&	24.1	&	16	&	40.1	\\
\hline	\end{tabular}

 \end{table}  
 
  \begin{table}[!htb]\center \caption{Detailed results, Li}\label{TabAllLi} \tiny
\setlength{\extrarowheight}{1.5pt}
 \begin{tabular}{l|ccc|ccc|ccc|ccccc}	&	&	&	&	\multicolumn{3}{c|}{\ILS}	&	\multicolumn{3}{c|}{\ILSClu}	&	\multicolumn{5}{c}{\Gen}\\	Inst.	&	$n$	&	$|\mathcal{C}|$	&	$m$	&	Best	&	Avg.	&	Avg.	&	Best	&	Avg.	&	Avg.	&	Best	&	Avg.	&	Avg.	&	Preproc.	&	Total\\	&	&	&	&	&	&Time(s)	&	&	&	Time(s)	&	&	&	Time(s)&	(s)	&	Time(s)\\
\hline	Li	&	560	&	113	&	39	&	\bf{\underline{27962}}	&	28079.7	&	714.6	&	27970	&	27997.1	&	122.7	&	\bf{\underline{27962}}	&	27963	&	83.9	&	200	&	283.9	\\																												
Li	&	600	&	121	&	62	&	29087	&	29111.8	&	783.6	&	29063	&	29077.6	&	123	&	\bf{\underline{29051}}	&	29055.6	&	94.8	&	196	&	290.8	\\																													
Li	&	640	&	129	&	10	&	21368	&	21515.2	&	2930.2	&	21365	&	21435.3	&	240.8	&	\bf{\underline{21243}}	&	21330.9	&	254.6	&	265	&	519.6	\\																													
Li	&	720	&	145	&	11	&	24777	&	24834.4	&	4444.1	&	24578	&	24664.5	&	330	&	\bf{\underline{24486}}	&	24557.4	&	315.5	&	327	&	642.5	\\																													
Li	&	760	&	153	&	78	&	35190	&	35206.7	&	1669.5	&	35175	&	35182.9	&	242.6	&	\bf{\underline{35166}}	&	35169.6	&	143.2	&	284	&	427.2	\\																													
Li	&	800	&	161	&	11	&	27350	&	27482.7	&	7422.5	&	27284	&	27362.8	&	457.8	&	\bf{\underline{27238}}	&	27307.7	&	363.8	&	263	&	626.8	\\																													
Li	&	840	&	169	&	86	&	37878	&	37895	&	1881.6	&	37860	&	37871.3	&	298.3	&	\bf{\underline{37859}}	&	37871.4	&	222.8	&	302	&	524.8	\\																													
Li	&	880	&	177	&	11	&	30637	&	30766.5	&	10824.4	&	30568	&	30665.7	&	571.3	&	\bf{\underline{30483}}	&	30580.2	&	444.7	&	402	&	846.7	\\																													
Li	&	960	&	193	&	11	&	32873	&	32958	&	14578.7	&	32763	&	32888.3	&	742.6	&	\bf{\underline{32656}}	&	32717.1	&	515.6	&	290	&	805.6	\\																													
Li	&	1040	&	209	&	11	&	35991	&	36083	&	20634.3	&	35976	&	36044.6	&	843.5	&	\bf{\underline{35885}}	&	35919.1	&	554.6	&	440	&	994.6	\\																													
Li	&	1120	&	225	&	11	&	38690	&	38749	&	22831.5	&	38727	&	38785.8	&	1187.2	&	\bf{\underline{38669}}	&	38687.8	&	566.9	&	397	&	963.9	\\																													
Li	&	1200	&	241	&	11	&	41534	&	41621.2	&	25868.3	&	41563	&	41622.9	&	1269.8	&	\bf{\underline{41461}}	&	41490.7	&	582.8	&	411	&	993.8	\\													\hline	\end{tabular}

 \end{table}  
\end{landscape}

\end{document}